\renewcommand\cite{\citep}  
\newcommand*{\scriptS}{\ensuremath{\mathcal{S}}}
\newcommand*{\scriptA}{\ensuremath{\mathcal{A}}}
\newcommand*{\scriptD}{\ensuremath{\mathcal{D}}}
\newcommand*{\scriptH}{\ensuremath{\mathcal{H}}}
\newcommand*{\scriptO}{\ensuremath{\mathcal{O}}}
\newcommand*{\scriptY}{\ensuremath{\mathcal{Y}}}
\newcommand*{\scriptJ}{\ensuremath{\mathcal{J}}}
\newcommand*{\scriptL}{\ensuremath{\mathcal{L}}}
\newcommand*{\doubleR}{\ensuremath{\mathbb{R}}}
\newcommand*{\doubleP}{\ensuremath{\mathbb{P}}}
\newcommand*{\doubleE}{\ensuremath{\mathbb{E}}}
\newcommand{\kldiv}{\mathit{KL}}
\newcommand{\indep}{\perp \!\!\! \perp}
\theoremstyle{plain}
\newtheorem{theorem}{Theorem}[section]
\newtheorem{lemma}[theorem]{Lemma}
\newtheorem{corollary}[theorem]{Corollary}
\theoremstyle{definition}
\newtheorem{definition}[theorem]{Definition}
\theoremstyle{remark}
\icmltitlerunning{Trust-Region Twisted Policy Improvement}
\begin{document}

\twocolumn[
\icmltitle{Trust-Region Twisted Policy Improvement}

\begin{icmlauthorlist}
\icmlauthor{Joery A. de Vries}{tud}
\icmlauthor{Jinke He}{tud}
\icmlauthor{Yaniv Oren}{tud}
\icmlauthor{Matthijs T. J. Spaan}{tud}
\end{icmlauthorlist}

\icmlaffiliation{tud}{Delft University of Technology, Delft, the Netherlands}

\icmlcorrespondingauthor{Joery A. de Vries}{J.A.deVries@tudelft.nl}

\icmlkeywords{Machine Learning, ICML, Reinforcement Learning, Sequential Monte-Carlo, Planning}

\vskip 0.3in
]

\printAffiliationsAndNotice{}

\begin{abstract}
Monte-Carlo tree search (MCTS) has driven many recent breakthroughs in deep reinforcement learning (RL).
However, scaling MCTS to parallel compute has proven challenging in practice which has motivated alternative planners like sequential Monte-Carlo (SMC). 
Many of these SMC methods adopt particle filters for smoothing through a reformulation of RL as a policy inference problem.
Yet, persisting design choices of these particle filters often conflict with the aim of online planning in RL, which is to obtain a policy improvement at the start of planning.
Drawing inspiration from MCTS, we tailor SMC planners specifically to RL by improving data generation within the planner through constrained action sampling and explicit terminal state handling, as well as improving policy and value target estimation.
This leads to our \emph{Trust-Region Twisted} SMC (TRT-SMC), which shows improved runtime and sample-efficiency over baseline MCTS and SMC methods in both discrete and continuous domains. 
\end{abstract}

% Main Content
\section{Introduction}

% Relevance of MCTS and planning.
Monte-Carlo tree search (MCTS) with neural networks \cite{browne2012surveyMCTS} has enabled many recent successes in sequential decision making problems such as board games \cite{silver_2018_alphazero}, Atari \cite{weirui_2021_efficientzero}, and algorithm discovery \cite{Fawzi2022, Mankowitz2023alphadev}. 
These successes demonstrate that combining decision-time planning and reinforcement learning (RL) often outperforms methods that utilize search or deep neural networks in isolation.
Naturally, this has stimulated many studies to understand the role of planning in combination with learning \cite{guez2019modelfreeplanning, devries2021visualizing, hamrick2021roleofplanning, bertsekas2022lessons, he2023model} along with studies to improve specific aspects of the base algorithm \cite{hubert_2021_sampledAZ, danihelka2022policy, antonoglou2022stochasticmuzero, oren2025epistemic}.

% Argue that SMC based methods show promise over MCTS
Although MCTS has been a leading technology in recent breakthroughs, the tree search is inherently sequential, can deteriorate agent performance at small planning budgets \cite{grill_2020_mctsRPO}, and requires significant modifications for general use in RL \cite{hubert_2021_sampledAZ}.
The sequential nature of MCTS is particularly crippling as it limits the full utilization of modern hardware such as GPUs.
Despite follow-up work attempting to address specific issues, alternative planners have since been explored that avoid these flaws.
Successful alternatives in this area are often inspired by stochastic control \cite{del_moral_feynmankac_2004, aastrom2012introduction}, examples include path integral control \cite{theodorou_generalized_2010, williams_model_2015, hansen_temporal_2022} and related sequential Monte-Carlo (SMC) methods \cite{christian_2019_elementsSMC, chopin_introduction_2020}.

% Argue that SMC methods are (undercooked) not as well developed as MCTS
Specifically, recent variational SMC planners \cite{naesseth2018varSMC, macfarlane2024spo} have shown great potential in terms of generality, performance, and scalability to parallel compute.
These methods adopt a particle filter for trajectory smoothing to enable planning in RL \cite{piche2018probabilistic}.
However, the distribution of interest for these particle filters do not perfectly align with learning and exploration for RL agents.
Namely, recent SMC methods focus on estimation of the trajectory distribution under an unknown policy, and not the actual unknown policy at the state where we initiate the planner.
We find that this mismatch can cause SMC planners to suffer from unnecessarily high-variance estimation and waste much of their compute and data during planning. 
In other words, online planning in RL should serve as a local \textit{approximate policy improvement} \cite{chan_greedification_2022, sutton_reinforcement_2018}.
Fortunately, existing MCTS and SMC literature provides various directions to achieve this \cite{delmoral2010forwardsmc, svensson_nonlinear_2015, lawson2018twisted, danihelka2022policy, grill_2020_mctsRPO}, but their use in SMC-planning remains largely unrealized.

% Summarize contributions
This paper aims to address the current limitations of bootstrapped particle filter planners for RL by drawing inspiration from MCTS.
Our contributions are 1) to make more accurate estimates of statistics extracted by the planner, at the start of planning, and 2) enhancing data-efficiency inside the planner.
We address the pervasive \emph{path-degeneracy} problem in SMC by backing up accumulated reward and value data to perform policy inference and construct value targets.
Next, to reduce the variance of estimated statistics due to resampling, we use exponential \emph{twisting} functions to improve the sampling trajectories inside the planner.
We also impose adaptive trust-region constraints to a prior policy, to control the bias-variance trade-off in sampling proposal trajectories.
Finally, we modify the resampling method \cite{christian_2019_elementsSMC} to correct particles that become permanently stuck in absorbing states due to termination in the baseline SMC.
We dub our new method \emph{Trust-Region Twisted} SMC and demonstrate improved sample-efficiency and runtime scaling over SMC and MCTS baselines in discrete and continuous domains.

\section{Background}

We want to find an optimal policy $\pi$ for a sequential decision-making problem, which we formalize as an infinite-horizon Markov decision process (MDP) \cite{sutton_reinforcement_2018}. 
We define states $S\in \scriptS$, actions $A \in \scriptA$, and rewards $R \in \doubleR$ as random variables, where we write $H_{1:T} = \{S_t, A_t\}^T_{t=1}$ as the joint random variable of a \emph{finite} sequence,
\begin{align} \label{eq:rl_history}
    p_\pi(H_{1:T}) = \prod_{t=1}^T \pi(A_t | S_t) p(S_t | S_{t-1}, A_{t-1}),
\end{align}
where $p(S_1 | A_0, S_0) \overset{\Delta}{=} p(S_1)$ is the initial state distribution, $p(S_{t+1} | S_t, A_t)$ is the transition model, and $\pi(A_t | S_t)$ is the policy. We denote the set of admissible policies as $\Pi \overset{\Delta}{=} \{\pi | \pi: \scriptS \rightarrow \doubleP(\scriptA)\}$, our aim is to find a parametric $\pi_\theta \in \Pi$ (e.g., a neural network) such that $\doubleE_{p_{\pi_\theta}(H_{1:T})} [\sum_{t=1}^T R_t]$ is maximized, where we abbreviate $R_t=R(S_t, A_t)$ for the rewards.  For convenience, we subsume the discount factor~$\gamma \in [0, 1]$ into the transition probabilities as a termination probability of $p_{\text{term}} = 1 - \gamma$ assuming that the MDP always ends up in an absorbing state $S_T$ with zero rewards.

\subsection{Control as Inference} \label{sec:prelim:control_as_inference}
The reinforcement learning problem can be recast as a probabilistic inference problem through the control as inference framework \cite{levine2018reinforcement}. 
This reformulation has lead to successful algorithms like MPO \cite{abdolmaleki2018maximum} that naturally allow \emph{regularized} policy iteration \cite{geist2019regmdp}, which is highly effective in practice with neural network approximation. Additionally, it enables us to directly use tools from Bayesian inference on our graphical model.

To formalize this, the distribution for $H_{1:T}$ can be conditioned on a binary outcome variable $\scriptO_t \in \{0, 1\}$, then given a likelihood for $p(\scriptO_{1:T} = 1| H_{1:T}) = p(\scriptO_{1:T} | H_{1:T})$ this gives rise to a posterior distribution $p(H_{1:T} | \scriptO_{1:T})$. 
Typically, we use the exponentiated sum of rewards for the (unnormalized) likelihood $p(\scriptO_{1:T} | H_{1:T}) \propto \prod_{t=1}^T \exp R_t$.

\begin{definition} \label{def:posterior} The posterior factorizes as
    \begin{align*}
        p_\pi(H_{1:T} | \scriptO_{1:T}) = \prod_{t=1}^T p_\pi(A_t | S_t, \scriptO_{t:T}) p(S_t | S_{t-1}, A_{t-1}),
    \end{align*}
    assuming that $S_{t+1} \indep \scriptO_{1:T} | S_t, A_t$ and $A_t \indep \scriptO_{<t} | S_t$.
\end{definition}

The key part of Definition~\ref{def:posterior} is the posterior policy $p_\pi(A_t | S_t, \scriptO_{t:T})$ which, using Bayes rule, reads as
\begin{align} \label{eq:post_policy}
    p_\pi(A_t | S_t, \scriptO_{t:T}) \propto \pi(A_t | S_t) \exp [\ln p(\scriptO_{t:T} | S_t, A_t)].
\end{align}
This connects us to an (expected-reward) maximum-entropy reinforcement learning setting \cite{toussaint_robot_2009, ziebart_modeling_2010}, since the exponent admits a soft-Bellman recursion,
\begin{align} \label{eq:soft_value}
    \ln p(\scriptO_{t:T} | S_t, A_t) &= R(S_t, A_t) + \ln \doubleE \: e^{V^{\pi}_{\mathit{soft}}(S_{t+1})},
\end{align}
where the expectation is over the dynamics $p(S_{t+1} | S_t, A_t)$. 
An important property of the posterior policy is that it is not an optimal policy in the traditional objective $\doubleE_{p_{\pi}}[ \sum_t R_t]$, but only provides an improvement over a prior policy $\pi$. 
\begin{theorem} \label{theorem:regpolicy}
    The posterior policy $p(A_t | S_t, \scriptO_{t:T}), \forall t,$ is the optimal policy $q^*$ for the regularized MDP,
    \begin{align*}
        q^* = \operatorname*{arg\,max}_{q \in \Pi} \doubleE \left[ \sum_{t=1}^T R_t - \kldiv(q (a | S_t) \Vert \pi(a | S_t) ) \right],
    \end{align*}
    where $q^*$ guarantees a policy improvement in the unregularized MDP, $\doubleE_{p_{q^*}(H_{1:T})} [\sum_{t=1}^T R_t] \ge \doubleE_{p_{\pi}(H_{1:T})} [\sum_{t=1}^T R_t]$. 
\end{theorem}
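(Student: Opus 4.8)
The plan is to establish the two assertions separately: first, that the posterior policy of Eq.~\eqref{eq:post_policy} is the maximizer $q^*$ of the regularized objective, and second, that any such maximizer dominates $\pi$ in the unregularized objective. The second part is nearly immediate, so essentially all the work sits in the first.

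For the first part I would introduce the regularized value function $V^q(s) = \doubleE_{p_q}[\sum_{k\ge 1} (R_k - \kldiv(q(\cdot|S_k)\Vert\pi(\cdot|S_k))) \mid S_1 = s]$ and the corresponding action-value $Q^q(s,a) = R(s,a) + \doubleE_{S'\sim p(\cdot|s,a)}[V^q(S')]$. Because $\gamma$ has been folded into a termination probability $p_{\text{term}}$ and the MDP reaches an absorbing zero-reward state, these quantities are finite, and the regularized Bellman optimality operator $(\mathcal{T}V)(s) = \max_{q(\cdot|s)\in\doubleP(\scriptA)} \doubleE_{a\sim q}[\, R(s,a) + \doubleE_{S'}V(S') - \ln(q(a|s)/\pi(a|s))\,]$ admits a unique fixed point $V^*$; I would justify this by a standard contraction argument built on the termination probability, or equivalently by value iteration from the absorbing boundary condition $V\equiv 0$. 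The optimal policy is then the per-state maximizer attained inside $\mathcal{T}$ evaluated at $V = V^*$.

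Next I would resolve that inner maximization in closed form. For any bounded $f:\scriptA\to\doubleR$ and the Gibbs tilt $\tilde q_f(a) \propto \pi(a)e^{f(a)}$, expanding the two relative entropies gives the identity $\doubleE_{a\sim q}[f(a)] - \kldiv(q\Vert\pi) = \ln\doubleE_{a\sim\pi}[e^{f(a)}] - \kldiv(q\Vert\tilde q_f)$, so the maximum over $q\in\doubleP(\scriptA)$ is attained uniquely at $q=\tilde q_f$ with value $\ln\doubleE_{a\sim\pi}[e^{f(a)}]$. Taking $f(a)=Q^*(s,a)$ yields $V^*(s)=\ln\doubleE_{a\sim\pi}[e^{Q^*(s,a)}]$ and $q^*(a|s)\propto\pi(a|s)e^{Q^*(s,a)}$. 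Comparing this pair of recursions with the soft-Bellman recursion~\eqref{eq:soft_value}, a backward induction anchored at the absorbing state (where both sides vanish) identifies $Q^*(s,a)$ with $\ln p(\scriptO_{t:T}\mid S_t=s, A_t=a)$ and $V^*$ with $V^\pi_{\mathit{soft}}$; the conditional-independence assumptions of Definition~\ref{def:posterior}, in particular $S_{t+1}\indep\scriptO_{1:T}\mid S_t,A_t$, are precisely what make the regularized MDP carry the prior dynamics $p(S_{t+1}\mid S_t,A_t)$ so that the two recursions coincide. Hence $q^*(a|s)\propto\pi(a|s)\exp[\ln p(\scriptO_{t:T}\mid s,a)]$, which by Eq.~\eqref{eq:post_policy} is exactly the posterior policy $p_\pi(A_t\mid S_t,\scriptO_{t:T})$.

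The policy-improvement inequality then follows with essentially no computation: plugging $q=\pi$ into the regularized objective kills every $\kldiv(\pi(\cdot|S_t)\Vert\pi(\cdot|S_t))$ term, so that objective equals $\doubleE_{p_\pi(H_{1:T})}[\sum_t R_t]$; optimality of $q^*$ makes the regularized objective at $q^*$ at least this large; and since each $\kldiv(q^*(\cdot|S_t)\Vert\pi(\cdot|S_t))\ge 0$, dropping these nonnegative penalties from the $q^*$ objective can only decrease it, giving $\doubleE_{p_{q^*}(H_{1:T})}[\sum_t R_t] \ge \doubleE_{p_\pi(H_{1:T})}[\sum_t R_t]$. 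I expect the genuine obstacle to be not any single step but the bookkeeping that glues them together: making the regularized Bellman recursion well-posed under the random absorbing horizon (finiteness of the log-sum-exp values, existence and uniqueness of the fixed point) and rigorously matching the inference object $\ln p(\scriptO_{t:T}\mid S_t,A_t)$ to the control object $Q^*$ through the stated conditional independences.
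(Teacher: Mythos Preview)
Your argument is correct but takes a genuinely different route from the paper. For the first assertion, the paper works on the \emph{inference} side via the ELBO decomposition (Lemma~\ref{lemma:decomp}): since $\ln p_\pi(\scriptO)$ does not depend on $q$, maximizing the regularized objective is equivalent to driving the evidence gap $\kldiv(p_q(H)\Vert p_\pi(H\mid\scriptO))$ to zero, and the factorization of Definition~\ref{def:posterior} guarantees this is achieved exactly by setting $q$ to the posterior policy. You instead work on the \emph{control} side: build the regularized Bellman operator, resolve the per-state maximization by the Gibbs variational identity, and match the resulting recursion to Eq.~\eqref{eq:soft_value}. The paper's route is essentially one line and sidesteps all the fixed-point and well-posedness bookkeeping you flag as the main obstacle; yours is more constructive from a dynamic-programming viewpoint and exposes the log-sum-exp value and the exponentially-tilted optimal policy directly. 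One small imprecision worth tightening: the assumption $S_{t+1}\indep\scriptO_{1:T}\mid S_t,A_t$ is not what makes the \emph{regularized MDP} carry the prior dynamics (it always does, by construction of the objective); its actual role is to force $e^{V^\pi_{\mathit{soft}}(S')}$ to be constant on the support of $p(S'\mid S_t,A_t)$, which collapses the soft-Bellman's $\ln\doubleE_{S'}[e^{V^\pi_{\mathit{soft}}(S')}]$ to your $\doubleE_{S'}[V^*(S')]$ so that the two recursions coincide. For the policy-improvement inequality, your ``set $q=\pi$, invoke optimality of $q^*$, then drop the nonnegative $\kldiv$ penalties'' argument is in fact sharper than the paper's Lemma~\ref{lemma:regpi}, which only argues improvement informally through the exponential tilting of $\pi$ toward higher $Q^\pi_{\mathit{soft}}$.
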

\begin{proof}
    See Appendix~\ref{ap:proof:regpolicy} or Sec.~2.4 of \cite{levine2018reinforcement}.
\end{proof}

The optimal regularized policy $q^*$ can be interpreted as a \emph{variational} policy\footnote{From this point on we will interchange the regularized optimal policy $q^*_{(n)}(A_t |S_t)$ with the posterior policy $p_{\pi_{(n)}}(A_t | S_t, \scriptO_{t:T})$.} that minimizes the evidence gap \cite{bishop2007}, or conversely, maximizes the evidence lower-bound to $\ln p(\scriptO_{1:T})$. 
In practice, this can be used in expectation-maximization (EM) methods \cite{Neal1998} to iteratively update the prior policy $\pi_{(n+1)} \leftarrow q^*_{(n)}$.
This gives rise to an effective framework for approximate policy iteration that is both amenable to gradient based updating of $\pi_\theta$ and provably recovers the traditional (locally) optimal policy $\max_{\pi \in \Pi} \doubleE_{p_\pi} [\sum_t R_t]$ as $n\rightarrow \infty$ (see Appendix~\ref{ap:control_as_inference}).

\subsection{Particle Filter Planning for RL}

Although the estimation of the regularized optimal policy~$q^*$ mitigates a number of practical challenges in deep RL, it also requires re-estimating~$q^*_{(n)}$ after each update to the prior~$\pi_{(n+1)}$. 
Additionally, the posterior policy for any $\pi$ always requires the solutions to the soft value-estimation problem for $Q^{\pi}_{\mathit{soft}}$ and $V^{\pi}_{\mathit{soft}}$.
Algorithms like MPO deal with this by approximating the value using neural networks $Q_\theta^\pi \approx Q^{\pi}_{\mathit{soft}}$ to then estimate the posterior policy through Monte-Carlo.
Intuitively, this can be interpreted as a 1-timestep approximation to $q^*$. 
To see this, evaluating $Q^\pi_\theta(S_t, A_t)$ over samples $A_t\sim \pi(A_t | S_t)$ amortizes the message-passing process for evaluating $Q^{\pi}_{\mathit{soft}}$ into a direct (cheap) mapping from $\scriptS \times \scriptA \rightarrow \doubleR$.
The main limitation of this approach is the bias induced by this new function $Q^\pi_\theta$. 
Towards this end, sequential Monte-Carlo (SMC) methods \cite{piche2018probabilistic, christian_2019_elementsSMC} offer a powerful model-based strategy for improving the estimate of $q^*$ through a multi-timestep. 

The SMC algorithm for RL \cite{piche2018probabilistic} is a sequential importance sampling (IS) method that aims to draw samples $H_{1:t}$ from the posterior through a proposal distribution $p_q(H_{1:t})$ (as in Eq.~\ref{eq:rl_history} for some $q \in \Pi$).
By definition, our graphical model (Def.~\ref{def:posterior}) factorizes recursively,
\begin{align} \label{eq:rec}
    p_\pi(H_{1:T} | \scriptO_{1:T}) = p_\pi(H_{1:t} | \scriptO_{1:T})  p_\pi(H_{t+1:T} | H_t, \scriptO_{t+1:T}),
\end{align}
meaning that we can sample data from $H_{1:t} \sim p_q(H_{1:t})$ and accumulate the IS-weights sequentially.

\begin{corollary}[Sequential importance sampling] \label{cor:is_weights}
    Assuming access to the transition model $p(S_{t+1} | S_t, A_t)$, we obtain the importance sampling weights for $p_\pi(H_{1:t} | \scriptO_{1:T}) / p_q(H_{1:t})$,
    \begin{align*}
        w_t = w_{t-1} \cdot \frac{\pi(A_t |S_t)}{q(A_t | S_t)} \exp(R_t) \frac{\doubleE[\exp V^\pi_{\mathit{soft}}(S_{t+1})]}{\exp V^\pi_{\mathit{soft}}(S_t)}.
    \end{align*}
\end{corollary}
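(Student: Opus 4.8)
The plan is to obtain the recursion in three moves: (i) peel the last step off the intermediate smoothing target $p_\pi(H_{1:t} \mid \scriptO_{1:T})$, writing it as (last step of) $p_\pi(H_{1:t-1} \mid \scriptO_{1:T})$ times a transition kernel times the posterior policy; (ii) divide by the proposal $p_q(H_{1:t})$ so the transition kernels cancel; and (iii) rewrite the single surviving new factor, the posterior policy $p_\pi(A_t \mid S_t, \scriptO_{t:T})$, using Bayes' rule (Eq.~\ref{eq:post_policy}) together with the soft-Bellman identity of Eq.~\ref{eq:soft_value}.

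First I would unfold one step of the target. Iterating the one-step factorisation that defines the posterior in Definition~\ref{def:posterior} (equivalently, applying the chain rule to $p_\pi(H_{1:t}\mid\scriptO_{1:T})$ and collapsing each conditional with the assumptions $S_{t+1}\indep\scriptO_{1:T}\mid S_t,A_t$ and $A_t\indep\scriptO_{<t}\mid S_t$) gives $p_\pi(H_{1:t} \mid \scriptO_{1:T}) = p_\pi(H_{1:t-1} \mid \scriptO_{1:T})\, p(S_t \mid S_{t-1}, A_{t-1})\, p_\pi(A_t \mid S_t, \scriptO_{t:T})$; in particular the transition kernel is left untouched by the conditioning, exactly as in Eq.~\ref{eq:rec}. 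The proposal is $p_q(H_{1:t}) = p_q(H_{1:t-1})\, p(S_t \mid S_{t-1}, A_{t-1})\, q(A_t \mid S_t)$ --- the same transition kernel, only the policy factor differs --- so in the ratio $w_t = p_\pi(H_{1:t}\mid\scriptO_{1:T}) / p_q(H_{1:t})$ the transition terms drop out and $w_t = w_{t-1}\cdot p_\pi(A_t\mid S_t,\scriptO_{t:T}) / q(A_t\mid S_t)$.

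Second I would expand the incremental factor using quantities already recorded in the text. By Eq.~\ref{eq:post_policy}, $p_\pi(A_t\mid S_t,\scriptO_{t:T}) = \pi(A_t\mid S_t)\, p(\scriptO_{t:T}\mid S_t,A_t)\,/\,p(\scriptO_{t:T}\mid S_t)$. Writing $Q^{\pi}_{\mathit{soft}}(S_t,A_t)=\ln p(\scriptO_{t:T}\mid S_t,A_t)$, the soft-Bellman recursion Eq.~\ref{eq:soft_value} turns the numerator into $p(\scriptO_{t:T}\mid S_t,A_t)=\exp(R_t)\,\doubleE[\exp V^{\pi}_{\mathit{soft}}(S_{t+1})]$ with the expectation over $p(S_{t+1}\mid S_t,A_t)$. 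The denominator is the action-marginal under the prior, $p(\scriptO_{t:T}\mid S_t) = \doubleE_{a\sim\pi(\cdot\mid S_t)}[\exp Q^{\pi}_{\mathit{soft}}(S_t,a)] = \exp V^{\pi}_{\mathit{soft}}(S_t)$, i.e.\ $V^{\pi}_{\mathit{soft}}$ is precisely the log-partition function that renormalises the prior-reweighted soft $Q$-values into the posterior policy. Substituting both into the recursion of the previous paragraph yields the incremental update $w_t = w_{t-1}\cdot \frac{\pi(A_t\mid S_t)}{q(A_t\mid S_t)} \exp(R_t)\, \frac{\doubleE[\exp V^{\pi}_{\mathit{soft}}(S_{t+1})]}{\exp V^{\pi}_{\mathit{soft}}(S_t)}$.

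I expect the only genuine subtlety to sit in the first step: arguing that conditioning on the \emph{whole} remaining outcome sequence $\scriptO_{t+1:T}$ leaves the transition kernel $p(S_t\mid S_{t-1},A_{t-1})$ unchanged and leaves $A_t$ depending on the outcome variables only through $\scriptO_{t:T}$. This is exactly what the two conditional-independence assumptions of Definition~\ref{def:posterior} are there to supply, so I would invoke that definition (and Eq.~\ref{eq:rec}) rather than re-deriving the d-separation argument; everything afterwards is bookkeeping with Bayes' rule and the soft-value identities. One sentence should also be spent on the ``access to the transition model'' hypothesis in the statement: the model enters the weight only through $\doubleE[\exp V^{\pi}_{\mathit{soft}}(S_{t+1})]$, an expectation over the next-state distribution rather than over a single sampled successor, and is therefore needed to \emph{form} the weight, not merely to roll the proposal forward.
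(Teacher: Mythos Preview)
Your proposal is correct and follows essentially the same route as the paper's proof: factor the target by peeling off the last step (the paper does this inside an importance-sampling expectation, you work directly with the density ratio, but the algebra is identical), cancel the shared transition kernel against the proposal, and then expand $p_\pi(A_t\mid S_t,\scriptO_{t:T})$ via Bayes' rule (Eq.~\ref{eq:post_policy}) and the soft-Bellman identity (Eq.~\ref{eq:soft_value}) to reach the stated incremental weight. Your added remarks on the conditional-independence step and on why the transition model is needed to form the next-state expectation are sound and go slightly beyond what the paper spells out, but they do not change the argument.
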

\begin{proof}
    The dynamics terms in the weights cancel out, the rest follows by definition from Equations~\ref{eq:post_policy},~\ref{eq:soft_value}, and~\ref{eq:rec}.
\end{proof}

In practice we cannot realistically compute $w_t$ since it requires the soft-values $V^\pi_{\mathit{soft}}$.
However, we can again approximate this $\widetilde{w}_t \approx w_t$ with the amortized estimate $V^\pi_\theta \approx V^\pi_{\mathit{soft}}$ at every intermediate timestep \cite{pitt_filtering_1999, lawson2018twisted}. 
With this, we can estimate posterior statistics using a single forward pass through: sampling traces~$H^{(i)}_{1:t}$, accumulating their approximate weights~$\widetilde{w}^{(i)}_t$, and then normalizing these~$\overline{w}^{(i)}_t = \frac{\widetilde{w}^{(i)}_t}{\sum_j \widetilde{w}^{(j)}_t}$ to obtain
\begin{align} \label{eq:smc_approx}
    \doubleE_{p_\pi(H_{1:t} | \scriptO)} f(H_{1:t}) &= \doubleE_{p_q} [w_t \cdot f(H_{1:t})] 
    \\
    & \approx
    \sum_{i=1}^K \overline{w}_t^{(i)} f(H^{(i)}_{1:t}), \quad H^{(i)}_{1:t} \sim p_q,
    \nonumber
\end{align}
where $f: \scriptH^t \rightarrow \scriptY$ is some arbitrary function over the data. For instance, the statistic $f(H_{1:t}) = \sum_{j=1}^t R_j$ would yield an estimate of the expected finite-horizon sum of rewards.

\begin{algorithm}[t]
    \setlength{\baselineskip}{1.25\baselineskip}

    \caption{Bootstrapped Particle Filter for RL}    \label{alg:particle_filter}
    \begin{algorithmic}[1]
        \REQUIRE $K$ (number of particles), $m$ (depth)
        \STATE Initialize: \vspace{-4mm}
        \begin{itemize}
            \setlength\itemsep{-.5em}
            \item Ancestor identifier $\{J^{(i)}_1=i \}_{i=1}^K$     
            \item States $\{S_1^{(i) }\sim p(S_1)\}_{i=1}^K $
            \item Weights $\{\widetilde{w}_0^{(i)} = 1\}_{i=1}^K$
        \end{itemize} \vspace{-3.5mm}
        \FOR{$t = 1$ to $m$}
            \item[] // Update particles
            \STATE $\{A_t^{(i)} \sim q(A_t | S_t^{(i)})\}^K_{i=1}$
            \STATE $\{S_{t+1}^{(i)} \sim p(S_{t+1} | S_t^{(i)}, A_t^{(i)}) \}_{i=1}^K $
            \STATE $\{\widetilde{w}_t^{(i)} = \widetilde{w}_{t-1}^{(i)}\frac{\pi(A_t^{(i)} | S_t^{(i)})}{q(A_t^{(i)} | S_t^{(i)})} e^{R_t^{(i)}} \frac{\doubleE \exp V_\theta^\pi(S_{t+1}^{(i)})}{\exp V_\theta^\pi(S_{t}^{(i)})}\}_{i=1}^K$
            \item[] // Bootstrap (periodically) through resampling
            \STATE $\{(J_t^{(i)}, S_{t+1}^{(i)}, A_t^{(i)})\}_{i=1}^K \sim \text{Multinomial}(K, \overline{w}_t)$
            \STATE $\{\widetilde{w}_t^{(i)} = 1\}_{i = 1}^K$
        \ENDFOR
        \RETURN $\{J_{1:m}^{(i)}, H_{1:m}^{(i)}, \widetilde{w}_{1:m}^{(i)}\}_{i=1}^K$
    \end{algorithmic}
\end{algorithm}

\paragraph{Bootstrapped Filter}
The bootstrapped particle filter for RL, as proposed by \citet{piche2018probabilistic}, improves the estimation in Eq.~\ref{eq:smc_approx} through (periodic) \emph{resampling}. 
This mitigates the issue of weight-impoverishment in sequential-IS, where some normalized weights dominate others $\overline{w}^{(i)}_t \gg \overline{w}^{(j)}_t, j \ne i$. 
A common strategy for this is multinomial resampling \cite{chopin_introduction_2020}, as shown in Algorithm~\ref{alg:particle_filter}. 
This method samples a number of traces~$H^{(i)}_{1:m} \sim p_q, i \in [1, K]$, referred to as particles, and periodically drops or duplicates these samples based on their weights~$\overline{w}^{(i)}_{1:m}, m \in [1, t]$, before resetting their weights back to a uniform distribution (bootstrap). 
Prior work \cite{piche2018probabilistic, macfarlane2024spo} then estimates the policy~$\hat{q}^*$ as a weighted mixture of point-masses,
\begin{align} \label{eq:dirac_smc}
    \hat{q}^*(A_t = a | S_t) = 
    \sum_{i=1}^K 
    \overline{w}_{t+m}^{(i)} 
    \delta({A_{t}^{(J^{(i)}_{t+m})}}= a ),
\end{align}
where $J^{(i)}_{t+m} \in [1, K]$ is the index tracking each samples' ancestor at the start of planning $t$ and $\delta ( \cdot )$ is a Dirac delta function over $\scriptA$ for the ancestor action-particles. 

A key property of Algorithm~\ref{alg:particle_filter} is that it can estimate $\hat{q}^*$ through a single forward pass of $K$ particles from $t$ to $t+m$ \cite{delmoral2010forwardsmc}. This allows for parallel sampling and updating of particles, with communication only required during the resampling step. As a result, it scales efficiently on modern GPU hardware, with memory complexity of~$\scriptO(K)$ and a parallelized time complexity of $\scriptO(m)$.

\paragraph{Variational SMC} 
Intuitively, resampling improves our estimate of Eq.~\ref{eq:smc_approx} by `correcting' particles with low-likelihood to higher likelihood regions of the target distribution. 
However, resampling also introduces noise and would ideally not be needed (or to a lesser extent) if our proposal distribution~$p_q$ produced samples that better match our target.
To this end, variational SMC methods \cite{naesseth2018varSMC, gu_neural_2015} learn a proposal distribution $q_\theta$ from sample estimates of the posterior target $\doubleE[\hat{q}^*] = q^*$.
\citet{macfarlane2024spo} use this strategy with neural networks to learn~$q_\theta \approx q^*$.
Given the neural network iterates $\{\theta_{(n)}\}_{n=1}^N$, their method also updates the prior policy~$\pi_{(n+1)} \leftarrow q_{\theta_{(n)}}$ at every step~$n$.
In other words, they combine variational SMC in an EM-loop \cite{Neal1998} by using the learned proposals~$q_{\theta_{(n)}}$ as the prior to regularize the posterior~$p_{\pi_{(n+1)}}(A_t | S_t, \scriptO_{t:T})$.

\section{Tailoring Particle Filter Planning to RL} \label{sec:method}

A key benefit of online planning in reinforcement learning (RL) is generating locally improved policies at each timestep~$t$ compared to a prior policy, which enhances learning targets and diversity of data \cite{hamrick2021roleofplanning, silver_2018_alphazero}. 
Despite recent progress in improving sequential Monte-Carlo (SMC) methods for local approximate \emph{policy improvement} \cite{chan_greedification_2022}, persisting design choices from conventional particle filtering do not align directly with this goal.
For instance, we find that the problem of \emph{path degeneracy}, inherent to particle filtering \cite{svensson_nonlinear_2015}, can cause policy inference in Eq.~\ref{eq:dirac_smc} to degenerate with deep search and waste much of the planner's generated data.
We also argue that variational SMC planners still make inefficient use of their particle budget due to periodic resampling, which can cause particles to get stuck in terminal states and delays using value information.
By contrasting this with Monte-Carlo tree search (MCTS), which specifically focuses on improving the policy at the root \cite{grill_2020_mctsRPO}, we tailor the SMC-planner for local \emph{policy improvement}.
This preserves the forward-only implementation of particle filtering while incorporating benefits inspired by MCTS.

\subsection{Adapting the Proposals for Sample-Efficiency}  \label{method:trt_proposal}

The resampling step in SMC (line~6, Alg.~\ref{alg:particle_filter}) is essential for redistributing particles that are unlikely under the posterior policy.
Simultaneously, variational SMC methods \cite{macfarlane2024spo} can shift some of this responsibility to a learned proposal distribution $q_\theta \approx \hat{q}^*$ by generating better samples.
This is useful as it reduces variance induced by resampling and amortizes posterior inference \cite{naesseth2018varSMC}.
However, in EM-loop style algorithms, the learned proposal distribution $q_\theta$ is an estimate of the posterior policy that is regularized to a prior from a previous iteration $q_\theta \approx p_{\pi_{(n-1)}}(A_t | S_t, \scriptO_{t:T})$, and not to the current iteration~$\pi_{(n)}$. 
This is computationally wasteful with infrequent resampling, as it can take multiple transitions before moving particles to rewarding trajectories \cite{lioutas2023critic}.

For this reason, we extend the proposal distribution to more accurately reflect the \emph{next} posterior policy $p_{\pi_{(n)}}(A_t | S_t, \scriptO_{t:T})$ by using both a learned policy $q_\theta \equiv \pi_{(n)}$ and value $\exp Q^\pi_\theta$.
This is also known as exponential twisting~\cite{asmussen_stochastic_2007} of the proposal, which is similarly used in the target distribution (Corollary~\ref{cor:is_weights}).
Twisting reduces the variance for $\hat{q}^*$ at the cost of some bias due to $Q^\pi_\theta$.
This enables lower particle budgets through improved particle efficiency, but also introduces the difficulty in controlling this trade-off.
In this regard, MCTS-based methods can offer some insight for dealing with this.

Since AlphaZero \cite{silver_2018_alphazero}, MCTS-based methods also often use a trained policy $\pi_\theta$ (conflated with the name prior distribution) to guide the search in combination with a P-UCT algorithm \cite{rosin_multiarmed_2011}. 
Crucially, the initial iterations of the algorithm rely more on $\pi_\theta$, which is interpolated to a greedy policy over the estimated values~$Q^\pi$ in later iterations.
\citet{grill_2020_mctsRPO} show how this causes inferred policies from MCTS to track a regularized objective similarly to Theorem~\ref{theorem:regpolicy} (with an added \emph{greediness} parameter) over consecutive iterations. 
This is relevant to our work, because it links the estimated quantity of our SMC planner to the approach taken by MCTS.
The main difference persists in that the iterations of MCTS induce an adaptive regularizer through a proxy for value-accuracy.

Inspired by this, we formulate our proposal distribution through a constrained optimization problem with an adaptive trust-region parameter $\epsilon_\alpha \in \doubleR_{\ge 0}$.
At each state $S_t$, the proposal $q$ solves for a locally constrained program,
\begin{align} \label{eq:twisted_prop}
    \max_{q \in \mathbb{P}(\scriptA | S_t)} &  \quad  \doubleE_{q(A_t | S_t)} Q^\pi_{\theta}(S_t, A_t),
    \\
    \text{s.t.,}&   \quad \kldiv(q(a | S_t) \Vert \pi_\theta(a | S_t)) \le \epsilon_\alpha, \nonumber
\end{align}
where $\alpha \in [0, 1]$ is a greediness tolerance level.
The actual trust-region $\epsilon_\alpha$ is then sandwiched between the prior~$\pi_\theta$ and the greedy policy $\pi^*$ over $Q^\pi_\theta$, such that, $\epsilon_\alpha = \alpha \cdot \kldiv(\pi^* \Vert \pi_\theta)$.
In similar spirit to MCTS, this guarantees that SMC always searches trajectories that interpolate between maximizing $Q^\pi_\theta$ or sticking to the prior $\pi_\theta$.
The Lagrangian of this program is similar to Theorem~\ref{theorem:regpolicy} with the sum of rewards replaced by $Q^\pi_\theta$ and the $\kldiv$ term scaled by a temperature.
Furthermore, Eq.~\ref{eq:twisted_prop} requires $\kldiv(q \Vert \pi) \le \epsilon$ at every $S_t \in \scriptS$ instead of in expectation over $p_q(H)$.
For solving this program, we use a bisection search using bootstrapped atoms from $\pi_\theta$, which is both general and computationally cheap (see Appendix~\ref{ap:trtpi_solution} for details).

% Adapting resampling for efficiency
% \subsection{Revived Resampling for Compute Efficiency}  \label{method:revive}
\subsection{Handling Terminal States with Revived Resampling}  \label{method:revive}

Although the control as inference framework from Section~\ref{sec:prelim:control_as_inference} enables the use of SMC methods for policy inference, it also introduces non-trivial caveats.
In particular, the infinite horizon formulation in Eq.~\ref{eq:rl_history} can lead to wasting compute when handling terminal states as absorbing states in a forward-only SMC algorithm.
Absorbing states result in transitions that loop back to the same state with zero reward, effectively treating this as part of the environment dynamics.
In an SMC planner, this can cause some of the~$K$ particles to become trapped in these absorbing states.
Although resampling should correct for this, reward sparsity and errors in value predictions $V^\pi_{\theta} \approx V^\pi_{\mathit{soft}}$ can render the weights to become nearly uniform, making these trapped particles indistinguishable from non-trapped ones.
This is a problem, because trapped particles do not contribute to gathering information about future values.

Furthermore, the resampling step can also move particles \emph{towards} absorbing states due to rewarding trajectories in previous steps.
Consecutive resampling in the SMC planner, however, is then likely to move these particles away from these states again because they no longer accumulate reward.
This phenomena is mostly problematic when performing policy inference according to Eq.~\ref{eq:dirac_smc} and is a consequence of the \emph{path degeneracy} problem \cite{svensson_nonlinear_2015}.

To address the trapped particles, we leverage the strategy of MCTS to reset trajectories to earlier states within the search tree.
In MCTS, each iteration of the algorithm completely resets the agent to the root state, enabling the accumulation of new trajectory and value information.
However, this full reset limits the ability to explore deep inside the search tree, which is a key benefit of SMC with its depth parameter $m$.
Therefore, we propose a `revived resampling' strategy to move particles back to their last non-terminal state.
This only requires caching an additional reference state for each particle, assuming that the environment correctly flags these as non-terminal.
Resampling is then performed to these reference states instead of the current states (see Appendix~\ref{ap:pseudo}).

\subsection{Mitigating Path Degeneracy for Policy Inference} \label{method:policy_inference}

A common problem in particle filtering is \emph{path degeneracy}, where most particles collapse to a single ancestor due to resampling. 
This is illustrated in the top of Figure~\ref{fig:path_degeneracy}, the grayed-out trajectories highlight the discarded data due to resampling in typical forward-only SMC methods.
This loss of ancestor diversity leads to a worse approximation of the distribution over states under our posterior policy \cite{svensson_nonlinear_2015}.
We remark that in RL the consequence of this problem is exacerbated since we predominantly care about the root-ancestors.
For this reason, SMC planners \cite{piche2018probabilistic, macfarlane2024spo, lioutas2023critic} that perform policy inference for $\hat{q}^*$ using mixtures of point-masses from Eq.~\ref{eq:dirac_smc} show deteriorating approximation quality as depth increases, as shown in Figure~\ref{fig:path_degeneracy} (bottom-left).

\begin{figure}[t]

    \centering
    \includegraphics[width=.85\linewidth]{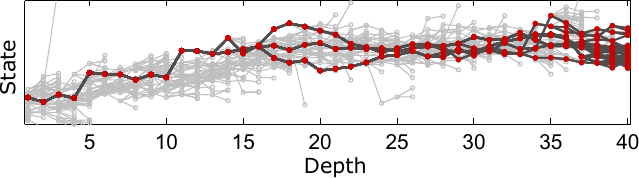}

    \medskip
    
    \includegraphics[width=\linewidth]{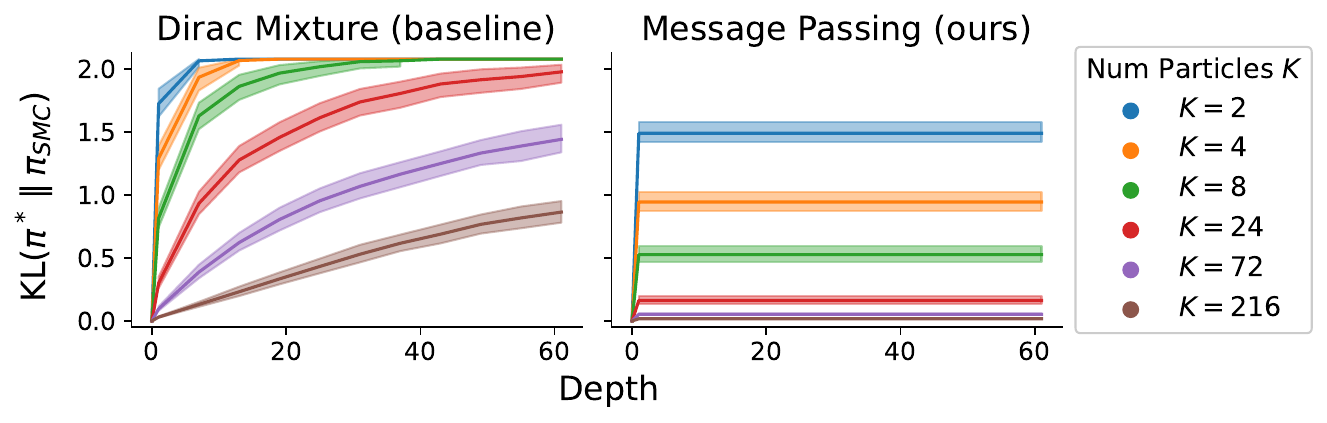}
    
    \caption{Illustration of \emph{path degeneracy} in particle filters (top; adapted from \citeauthor{svensson_nonlinear_2015},~\citeyear{svensson_nonlinear_2015}) and the consequence on divergence from the optimal policy $\pi^*$ (lower is better). With a finite budget of particles and \emph{resampling}, deep search will concentrate the Dirac mixture of remaining ancestors to a single atom (bottom-left). We improve this through approximate message-passing to the ancestors, which does not degenerate the policy (bottom-right). }
    \label{fig:path_degeneracy}
\end{figure}

In contrast, MCTS does not suffer from path degeneracy because it does not discard any data.
Policy inference is also typically done in MCTS by tracking a normalized state-action visitation counter that is updated in each iteration of the algorithm \cite{browne2012surveyMCTS}.
Importantly, the normalized visit-counts can be used as both a behavior policy and learning target \cite{silver_2018_alphazero}.
Recent work however, has shown that using such a visitation-count policy can degrade performance when using low planning budgets.
Instead, \citet{danihelka2022policy} show that inferring a regularized policy \cite{grill_2020_mctsRPO} using the value statistics from search for the root state-actions avoids this problem.

However, SMC planners do not track visit-counts or perform backpropagation to accumulate value statistics.
Since the importance sampling weights factorize recursively, we can adopt the approach by \citet{delmoral2010forwardsmc} to perform an online estimation to $\hat{Q}_{\mathit{SMC}} \approx Q^\pi_{\mathit{soft}}$ inside SMC using Eq.~\ref{eq:soft_value}, to then construct a policy estimate similarly to \citet{danihelka2022policy}.
At each SMC step $t$, we accumulate a current estimate of $\hat{Q}^{(j)}_\mathit{SMC}$ for any root-ancestor $j$ as,
\begin{align*}
    \hat{Q}_t^{(j)} = \hat{Q}_{t-1}^{(j)} + 
    \begin{cases}
        0, & \hspace{-.4em}  \scriptJ_t^{(j)} = \emptyset, 
        \\
        \ln \left( \frac{1}{|\scriptJ_t^{(j)}|} 
        \sum_{i \in \scriptJ_t^{(j)}} \frac{\widetilde{w}_t^{(i)}}{\widetilde{w}_{t-1}^{(i)}} \right), & \hspace{-.4em} \scriptJ_t^{(j)} \ne \emptyset,
    \end{cases}
\end{align*}
where $\scriptJ_t^{(j)} = \{ i \in [1, K] \mid J_t^{(i)} = j \}$ and $\hat{Q}^{(j)}_0 = 0$. 
In essence, this expression accumulates an average log-probability for the remaining particles belonging to ancestor $j$.
We then use the values $\hat{Q}_{t+m}^{(j)} \approx Q^\pi_{\mathit{soft}}(S_t, A_t^{(j)})$ to infer $\hat{q}^*$ by bootstrapping the atoms proportionally to $\pi_\theta(A_t^{(i)} | S_t) \exp \hat{Q}_\mathit{SMC}^{(i)}$.
Figure~\ref{fig:path_degeneracy} (bottom-right) shows that this eliminates policy deterioration as a function of depth, reducing the consequence of path degeneracy.

\subsection{Search-Based Values for Learning Targets}  \label{method:value_targets}

The approximate message passing from the previous subsection mitigates the path degeneracy problem by utilizing all SMC generated data for policy inference.
In essence, we are constructing a better estimate of the soft-value functions $V^\pi_{\theta} \approx V^\pi_{\mathit{soft}}$ \cite{lawson2018twisted, piche2018probabilistic} to then estimate $\hat{q}^*$ as given by Eq.~\ref{eq:post_policy}.
The value functions themselves are then trained using some temporal difference (TD) method in an outer learning loop, e.g., using $n$-step returns \cite{sutton_reinforcement_2018} given environment interactions outside of the planner (see also Appendix~\ref{ap:hyper}).
So far, most prior work constructs these TD-learning targets by value-bootstrapping from one-step predictions given states in the generated datasets \cite{piche2018probabilistic, lioutas2023critic, macfarlane2024spo}.
For instance, given a transition $(S_t, A_t, R_t, S_{t+1})$, a 1-step TD target would be computed as $Y_t = R_t + V_\theta^\pi(S_{t+1})$.
However, this approach neglects most data generated by the planner by only considering the 1-step value at the next states $S_{t+1}$. 

Similarly to recent versions of MuZero \cite{Schrittwieser_2020, weirui_2021_efficientzero, danihelka2022policy}, we instead use the values estimated by the search algorithm $\hat{V}_{t+1}$ over $V_\theta^\pi(S_{t+1})$ to compute these outer TD-targets.
Not only does this exploit the planner data, $\hat{V}_t$ is also objectively a better value estimator to use for policy improvement (i.e., within our EM-loop).
Namely, the predictions by $V_\theta^\pi$ give the value of a previous posterior policy (off-policy), whereas $\hat{V}_t$ is an estimate of the current posterior policy (on-policy).
During our testing, we found at low particle budgets for the SMC planner that it is important to control the variance of the inner value estimation.
For this reason, we compute $\hat{V}_t$ for value-learning using the Retrace($\lambda$) returns \cite{munos_retrace_2016} instead of the importance-weighted (soft) Monte-Carlo estimate used for the policy.
This only requires tracking a secondary statistic along with $\hat{Q}^{(j)}$.

\begin{figure*}[!t]
    \centering
    \includegraphics[width=\linewidth]{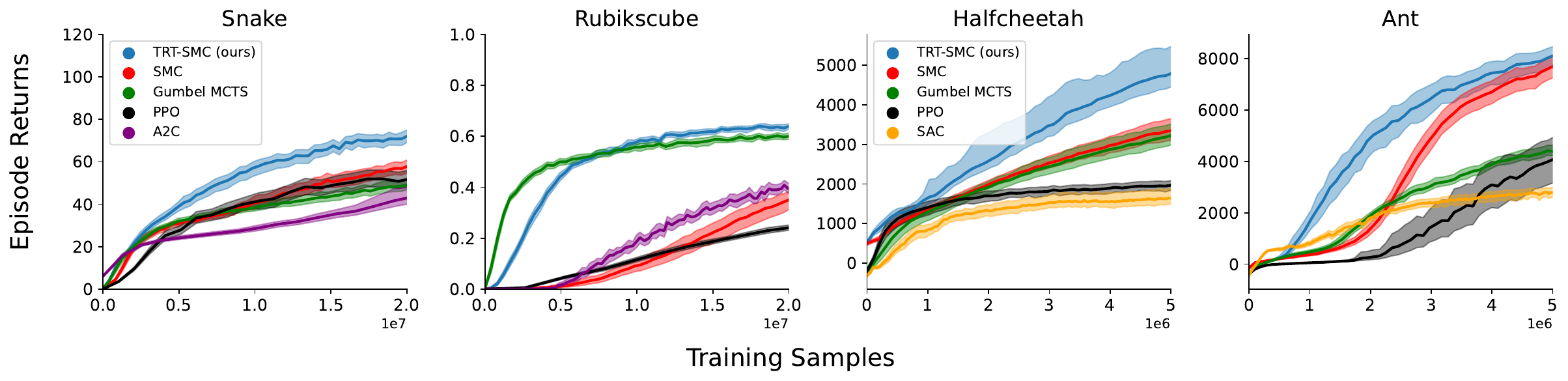}
    \vspace{-2.2em}
    \caption{Per environment evaluation curves for a planning budget of $N=K \times m = 16$. Shaded regions give 99\% two-sided BCa-bootstrap intervals over 30 seeds. This plot shows improved sample-efficiency of our method when a highly accurate simulator is available.}
    \label{fig:main_result}
\end{figure*}

\begin{figure*}[!th]
    \centering
    \includegraphics[width=\linewidth]{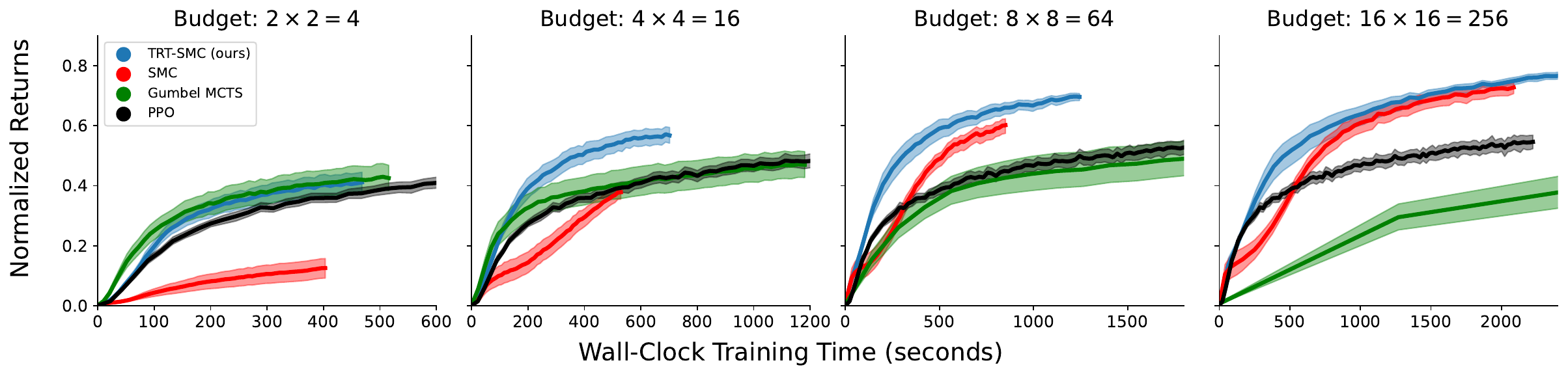}
    \vspace{-2.2em}
    \caption{
        Normalized average curves for the discrete environments over increasing planning budgets to compare performance to \emph{runtime}. 
        Shaded regions give 99\% two-sided BCa-bootstrap intervals over 2 times 30 seeds.
        Runtime was estimated by multiplying the training step with an interquartile mean of the runtime-per-step (see Appendix~\ref{ap:hardware} for details and Appendix~\ref{ap:results} for sample-efficiency).
    }
    \label{fig:scaling_runtime}
\end{figure*}

\section{Experiments}

We introduce our new method, \textit{Trust-Region Twisted SMC} (TRT-SMC), a variational SMC method that is tailored for planning in RL.
We claim that TRT-SMC implements a stronger approximate \emph{policy improvement} over baseline approaches when used in an expectation-maximization framework \cite{abdolmaleki2018maximum, chan_greedification_2022}.
The contribution of a stronger policy improvement operator can be isolated to 1) enhanced action-selection during training, and 2) improved learning targets \cite{hamrick2021roleofplanning}. 
Therefore, we expect our method to yield higher final test returns and steeper learning curves in terms of sample-efficiency (training samples) and runtime efficiency (wallclock time).
We compared our TRT-SMC against the variational SMC method by \citet{macfarlane2024spo}, and the current strongest Monte-Carlo tree search (MCTS) method, Gumbel AlphaZero by \citet{danihelka2022policy}.

We performed experiments in the Brax continuous control tasks \cite{freeman2021brax} and Jumanji discrete environments \cite{bonnet2024jumanji}, using the authors' A2C and SAC results as baselines alongside our PPO implementation \cite{schulman2017proximal}.
Although we compare sample-efficiency to the model-free baselines, this is only for reference since we do not account for the additional observed transitions by the planner in the main results.
In other words, we are testing the setting where we assume access to a highly accurate simulator for planning purposes (see Appendix~\ref{ap:results} for additional results that also count the simulator samples).
Performance is reported as the average offline return over 128 episodes.
Unless otherwise stated, all experiments were repeated (retrained) across 30 seeds, with 99\% two-sided BCa-bootstrap confidence intervals \cite{efron_better_1987}. For more details, see Appendix~\ref{ap:setup}.

\subsection{Main Results}

We show the evaluation curves for comparing sample-efficiency in Figure~\ref{fig:main_result}, where the planner-based methods used a budget of $N=16$ transitions.
For simplicity, we kept the depth $m$ of the SMC planner uniform to the number of particles $K$, such that $K = m = \sqrt{N}$.
Our ablations in the subsection~\ref{sub:ablations} also show that keeping $m$ and $K$ somewhat in tandem is ideal for SMC.
On each individual environment we observe that \textit{our TRT-SMC shows steeper and higher evaluation curves than the baseline variational SMC method, which is in line with our expectations.}

Additionally, we compare the runtime scaling for additional planning budget in terms of normalized returns on the discrete Jumanji environments in Figure~\ref{fig:scaling_runtime}.
We measured this by aggregating the average returns scaled by their environments' known min-max bounds.
We see that with more planning budget that the baseline SMC often starts to approach our TRT-SMC, and that the Gumbel MCTS method scales poorly in wallclock time.
Most importantly, these results show that \textit{our TRT-SMC reliably scales in performance with additional budget, runtime, and training samples.}

\subsection{Ablations}  \label{sub:ablations}

The main results show that our TRT-SMC method can perform better compared to the baseline model-free, variational SMC, and MCTS methods in terms of sample-efficiency and training runtime.
However, we also want to asses: \textit{to what extent do each of our separate contributions improve the base method?}
Therefore, this section quantifies this across the different environments and parameter settings.

\paragraph{Proposal Distributions.} 
Firstly, we assess the interplay of the proposal distribution with the planning budget through the aggregated final test performance on the Jumanji environments in Figure~\ref{fig:main_ablations}.
This compares our TRT-SMC method when using either a twisted proposal distribution with a greediness tolerance of $\alpha = 0.1$, or $\alpha=0$ (which only uses the prior $\pi_\theta$). 
Similarly to the main results, at low particle budgets we observe significantly improved performance with the trust-region twisted proposal, but this gap decreases for $\alpha=0$ at higher particle budgets.
It also shows that performance scales favorably when the particle budget and the planner depth are in tandem with eachother.
\textit{This performance improvement at low particle budgets matches our aim of increasing particle-efficiency.}

\begin{figure*}[!t]
    \centering
    \raisebox{4mm}{\includegraphics[width=.38\linewidth]{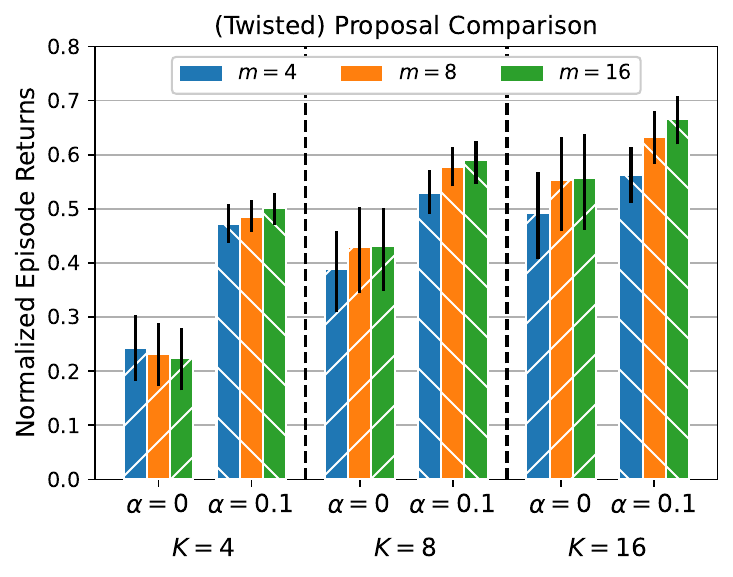}}
    \hspace{\fill}
    \includegraphics[width=.57\linewidth]{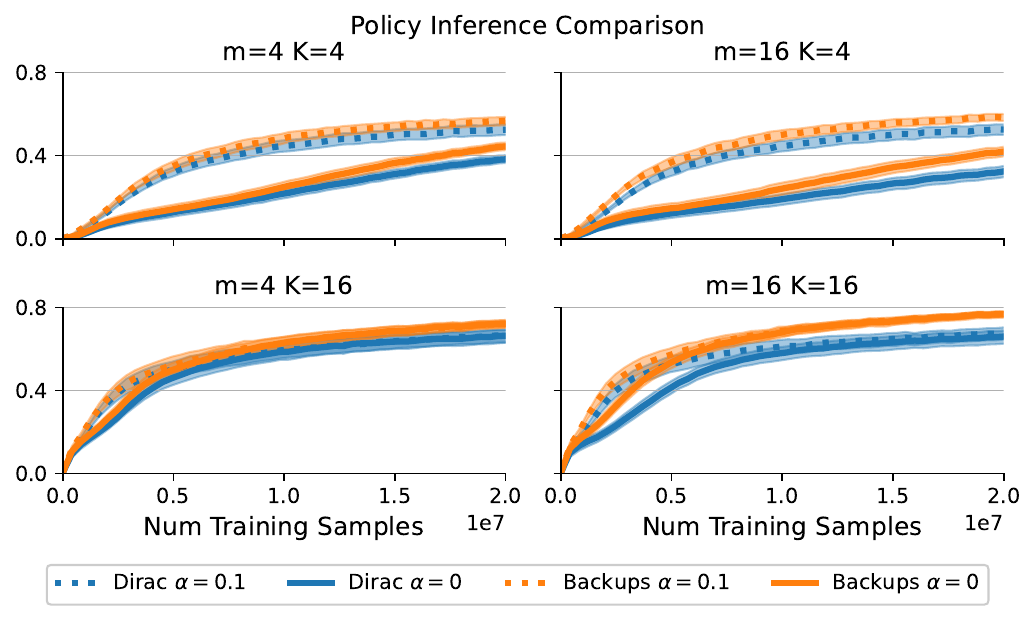}
    \vspace{-1em}
    \caption{Final expected performance (left) and evaluation curves over training (right), for different proposal trust-region levels and policy inference methods. The left barplot only uses our message-passing method (backups) for estimating $\hat{q}^*$, whereas the right plot compares both the Dirac mixture and our method. Both plots show that the constrained proposals $\alpha = 0.1$ improve performance over the prior proposal $\alpha = 0$ at low particle budgets. The right plot also shows that our backup method for $\hat{q}^*$ does not degenerate with deep planning.}
    \label{fig:main_ablations}
\end{figure*}

\begin{figure}[!t]
    \centering
    \includegraphics[width=\linewidth]{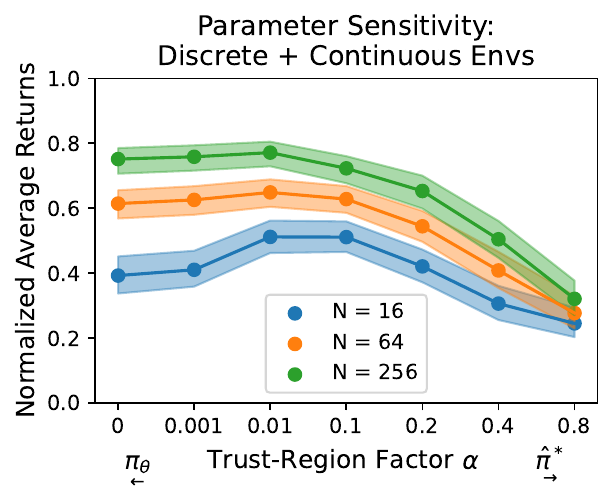}
    \caption{Sensitivity plot for the adaptive trust-region parameter $\alpha$ over increasing planning budgets $N = K \cdot m$ (where $K = m$). 
    The $y$-axis indicates the normalized final test performance, aggregated across all tested environments.
    The pattern shows that at small planning budgets, a proposal distribution which accounts for the predicted $Q^\pi_\theta$ performs marginally better. }
    \label{fig:sensitivity_alpha}
\end{figure}

Secondly, we evaluated our TRT-SMC method varying the $\alpha \in [0, 1]$ parameter and uniformly scaling up the budget and depth (as done in Figure~\ref{fig:scaling_runtime}).
We aggregated the normalized final test results over all tested environments as given by the sensitivity plot in Figure~\ref{fig:sensitivity_alpha}.
Although we show the aggregated results here, we found that this pattern highly depends on the specific environment, we show the individual results in Appendix~\ref{ap:results}.
In essence, these results show that \textit{mixing between the prior $\pi_\theta$ and maximizing the predicted state-action value ${Q}_\theta^\pi$ improves performance compared to completely relying on either of them.}

\paragraph{Policy Inference.} 
We compare our method for policy inference to that of Eq.~\ref{eq:dirac_smc} by adopting a similar experiment setup as we used for the proposal ablations and varying these two choices.
We report the results for this experiment in terms of their evaluation curves in the right of Figure~\ref{fig:main_ablations} to also observe learning stability.
We find that the final performance of the Dirac policy at $K=4$ and $\alpha=0$ shows decreased final performance when the planner depth is increased from $m=4$ to $m=16$.
As expected, this effect does not occur for our message-passing method, although the proposal twisting seems to diminish this effect also.
Most importantly, \textit{our approach for computing $\hat{q}^*$ demonstrates a monotone improvement.}

\paragraph{Value Targets.}
To compare the improvement by the search-based value targets, we tested on the Ant environment for experiment variety.
We compared three variations of our TRT-SMC for constructing the \emph{outer} learning targets by linearly interpolating the 1-step ${V}_{\theta}$ and the SMC-based value estimate ${V}_{SMC}$, such that $\hat{V} = \sigma \cdot V_{\theta}  + (1 - \sigma) \cdot V_{SMC}$, where we used $\sigma \in \{0, \frac12, 1\}$.
We aggregated the final mean performances across different planner depths $m \in \{4, 8\}$, particle budgets $K \in \{4, 8\}$, resampling periods $r \in \{1, 3\}$, and whether to use our revived resampling or not.
To reduce moving parts, we did not use a twisted proposal ($\epsilon = 0$).
In total, across 30 repetitions, this gave us 480 experiments for which we report final marginal performance in the top of Table~\ref{tab:misc}.
Although the intervals overlap slightly, \textit{there is a trend that favors using SMC data for the learning targets.}

\paragraph{Revived Resampling.} 
We evaluated the effect of using the revived particle resampling in a similar, experiment setup to the value-targets ablations.
We tested on Jumanji Snake due to its sparse rewards and high likelihood of encountering terminal states (see Appendix~\ref{ap:setup:envs}).
Interestingly, the marginal test results in the bottom of Table~\ref{tab:misc} show that \textit{the revived resampling does not significantly differ from the baseline.}

\begin{table}[!t]
    \centering
    \caption{Confidence intervals for the final expected episode returns on Brax Ant for different value-estimation methods (top) and Jumanji Snake for the two resampling strategies (bottom).}  \label{tab:misc}
    \begin{tabular}{lcc}
        \toprule
        \textbf{Ablation Value} & $\hat{\mu}$ & $\hat{q}_{\alpha/2} - \hat{q}_{1-\alpha/2}$ \\
        \midrule
        \textbf{Value Targets on Ant} & & \\
        ${V}_\theta$ 
        & 6911.8 
        & $6669.7 - 7146.7$ \\
        $\frac12 {V}_\theta + \frac12 {V}_{SMC}$ 
        & 7214.7 
        & $6973.9 - 7455.6$ \\
        ${V}_{SMC}$ 
        & \textbf{7457.1 }
        & $7200.3 - 7715.3$ \\
        \midrule
        \textbf{Resampling on Snake} & & \\
        Baseline & 44.7 & $43.7 - 45.7$ \\
        Revived & 45.7 & $44.5 - 47.7$ \\
        \bottomrule
    \end{tabular}
\end{table}

\section{Related Work}
The connection of reinforcement learning (RL) to the statistical estimation of a probabilistic graphical model \cite{levine2018reinforcement} has in recent years proven useful in borrowing tools from Bayesian estimation for optimal policy inference.
Although we focus on sequential Monte-Carlo (SMC) methods \cite{hoffman_bayesian_2007, piche2018probabilistic}, this connection has also been used to exploit stochastic control methods like TD-MPC \cite{hansen2024tdmpc, theodorou_generalized_2010}.
Similarly, prior work has partially explored some of the modifications that we make to SMC-planners, either in isolation or in different contexts.
This paper therefore reinforces the connection between probabilistic inference and RL, but also links it back to recent approaches in Monte-Carlo tree search (MCTS) \cite{browne2012surveyMCTS, silver_2018_alphazero, Schrittwieser_2020, wang2024efficientzerov2masteringdiscrete}.

As discussed in Section~\ref{sec:prelim:control_as_inference}, our method builds on the variational SMC approach by \citet{macfarlane2024spo}.
Similarly, they also utilize trust-region methods, but on the neural network parameters during optimization and on the target distribution for SMC.
Our setup is more comparable to recent MCTS methods \cite{danihelka2022policy, wang2024efficientzerov2masteringdiscrete} since we only impose trust-regions on the proposals and nothing else.
In other words, we focus specifically on the \emph{policy improvement} part of the algorithm.
Then, \citet{lioutas2023critic} also explore a type of `twisted' proposals, they sample auxiliary action-particles and weight these with heuristic factors $\exp Q^\pi_\theta$ before computing transitions \cite{mullerler2015coarsetofineSMC}.
However, their approach has two issues: they mix the normalization of the auxiliary actions across particles and they do not impose sufficient regularization to trade-off the prior policy and the values.

Finally, our contributions are strongly tied to mitigating path degeneracy in SMC planners, which is a common theme in particle filtering \cite{chopin_introduction_2020}.
For instance, our estimation of the policy (and values) is similar to the online estimation of any time-separable function described by \citet{delmoral2010forwardsmc}, which was also motivated by path degeneracy.
Although we did not consider it here, there are many promising directions for future work in this area, like using anchor particles \cite{svensson_nonlinear_2015}, adaptive resampling \cite{christian_2019_elementsSMC}, or Rao-Blackwellisation \cite{casella_raoblackwellisation_1996, danihelka2022policy}.

\section{Conclusion}

This paper tailors a particle filter planner for its use within deep reinforcement learning. 
Specifically, we address default design choices within variational sequential Monte-Carlo that become problematic when applying these methods to perform policy inference.
Our contributions take inspiration from recent Monte-Carlo tree search methods to mitigate the path degeneracy problem, make better use of the data generated by the planner, and to improve planning budget utilization.
Experiments show that our \emph{Trust-Region Twisted sequential Monte-Carlo} (TRT-SMC) scales favorably to varying planning budgets in terms of runtime and sample-efficiency over the baseline policy improvement methods.
We hope that our approach inspires others to leverage more of the tools from both the planning and Bayesian inference literature, to further enhance the sample-efficiency and runtime properties of reinforcement learning algorithms.

% Acknowledgements should only appear in the accepted version. Comment out in submission.

\section*{Acknowledgements}
JdV and MS are supported by the AI4b.io program, a collaboration between TU Delft and dsm-firmenich, which is fully funded by dsm-firmenich and the RVO (Rijksdienst voor Ondernemend Nederland).

% Required by ICML

\section*{Impact Statement}
This paper advances planning algorithms for use in reinforcement learning. 
Our improvements specifically enable improved compute scaling, this has the potential to make this field of research more accessible to those with fewer compute resources or improve existing methods at reduced computational cost.
This can have diverse societal consequences, none which we feel must be specifically highlighted here.

%% BIBLIOGRAPHY

\bibliography{references}
\bibliographystyle{icml2025}

%%%%%%%%%%%%%%%%%%%%%%%%%%%%%%%%%%%%%%%%%%%%%%%%%%%%%%%%%%%%%%%%%%%%%%%%%%%%%%%
%%%%%%%%%%%%%%%%%%%%%%%%%%%%%%%%%%%%%%%%%%%%%%%%%%%%%%%%%%%%%%%%%%%%%%%%%%%%%%%
% APPENDIX
%%%%%%%%%%%%%%%%%%%%%%%%%%%%%%%%%%%%%%%%%%%%%%%%%%%%%%%%%%%%%%%%%%%%%%%%%%%%%%%
%%%%%%%%%%%%%%%%%%%%%%%%%%%%%%%%%%%%%%%%%%%%%%%%%%%%%%%%%%%%%%%%%%%%%%%%%%%%%%%
\newpage
\appendix
\onecolumn

\section{Derivations} \label{ap:control_as_inference}

We first restate the factorization of the marginal in Eq. \ref{eq:rl_history} from the main text,
\begin{align*}
    p_\pi(H_{1:T}) = \prod_{t=1}^T \pi(A_t | S_t) p(S_t | S_{t-1}, A_{t-1}), 
\end{align*}
with $p(S_1 | A_0, S_0) \overset{\Delta}{=} p(S_1)$ being the initial state distribution, $p(S_{t+1} | S_t, A_t)$ is the transition model, and $\pi(A_t | S_t)$ is the policy.
We denote the set of admissible policies as $\Pi \overset{\Delta}{=} \{\pi | \pi: \scriptS \rightarrow \doubleP(\scriptA)\}$.
We drop subscripts for $H_{1:T}$ if the indexing is clear from the text.

\subsection{Lower-bound} \label{ap:ci:lowerbound}

For completeness, we show below that the factorization of $p_\pi(H_{1:T} | \scriptO_{1:T} = 1)$, by Definition~\ref{def:posterior}, recovers the lower-bound term for $q^*$ shown in Theorem~\ref{theorem:regpolicy}.
This is done through the well-known decomposition of the log-likelihood on the marginal distribution for the outcome variable $\scriptO_{1:T} \in \{0, 1\}^T$.
See Section~\ref{sec:prelim:control_as_inference} for a description on the meaning of this variable, again we will abbreviate $\scriptO = 1$ simply as $\scriptO$.

\begin{lemma}[Decomposition log-likelihood, c.f., Ch~9.4 of \citet{bishop2007}] \label{lemma:decomp}
    \begin{align}
        \ln p_\pi(\scriptO) = 
        \underbrace{
            \doubleE_{p_q(H)} \left[
            \sum_{t=1}^T R_t - \kldiv(q(a | S_t) \Vert \pi(a | S_t)) 
            \right]
        }_{\mathrm{Evidence~Lower-Bound}}
        + 
        \underbrace{
            \vphantom{ \left[\sum_{t=1}^T \right]}
            \kldiv \left((p_q(H) \Vert p_\pi(H | \scriptO) \right) 
        }_{\mathrm{Evidence~Gap}}
    \end{align}
\end{lemma}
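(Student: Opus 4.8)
The plan is to follow the textbook variational-inference decomposition referenced from \citet{bishop2007}, specialized to the trajectory model of Equation~\ref{eq:rl_history}. First I would observe that $\ln p_\pi(\scriptO)$ does not depend on $H$, so $\ln p_\pi(\scriptO) = \doubleE_{p_q(H)}[\ln p_\pi(\scriptO)]$ for \emph{any} proposal $q \in \Pi$. Writing $p_\pi(\scriptO) = p_\pi(H, \scriptO) / p_\pi(H \mid \scriptO)$ inside the expectation and multiplying and dividing by $p_q(H)$ gives
\begin{align*}
    \ln p_\pi(\scriptO) = \doubleE_{p_q(H)}\left[ \ln \frac{p_\pi(H, \scriptO)}{p_q(H)} \right] + \doubleE_{p_q(H)}\left[ \ln \frac{p_q(H)}{p_\pi(H \mid \scriptO)} \right],
\end{align*}
where the second term is by definition $\kldiv(p_q(H) \Vert p_\pi(H \mid \scriptO))$, i.e., the evidence gap.

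Next I would expand the first term. Using $p_\pi(H, \scriptO) = p(\scriptO \mid H)\, p_\pi(H)$ with the (unnormalized) likelihood $p(\scriptO \mid H) = \prod_{t=1}^T \exp R_t$, together with the factorizations $p_\pi(H) = \prod_t \pi(A_t \mid S_t)\, p(S_t \mid S_{t-1}, A_{t-1})$ and $p_q(H) = \prod_t q(A_t \mid S_t)\, p(S_t \mid S_{t-1}, A_{t-1})$, the transition terms cancel in the log-ratio and
\begin{align*}
    \ln \frac{p_\pi(H, \scriptO)}{p_q(H)} = \sum_{t=1}^T R_t + \sum_{t=1}^T \ln \frac{\pi(A_t \mid S_t)}{q(A_t \mid S_t)}.
\end{align*}
Taking the expectation over $p_q(H)$ and applying the tower rule to each summand — conditioning on $S_t$ under its $p_q$-marginal so that $\doubleE_{q(A_t \mid S_t)}[\ln (\pi(A_t \mid S_t)/q(A_t \mid S_t))] = -\kldiv(q(a \mid S_t) \Vert \pi(a \mid S_t))$ — turns this into $\doubleE_{p_q(H)}[\sum_t R_t - \kldiv(q(a \mid S_t)\Vert \pi(a\mid S_t))]$, which is the claimed ELBO. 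Combining the two pieces yields the statement, and since the evidence gap is a KL divergence it is nonnegative, so the ELBO is a genuine lower bound on $\ln p_\pi(\scriptO)$, tight exactly when $p_q(H) = p_\pi(H \mid \scriptO)$; this is the connection to Theorem~\ref{theorem:regpolicy}.

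The computation is essentially routine; the only points requiring care are (i) treating the proportionality in $p(\scriptO \mid H) \propto \prod_t \exp R_t$ as an equality, since any constant factor merely shifts $\ln p_\pi(\scriptO)$ without affecting the decomposition, (ii) ensuring the proposal $p_q$ and the prior $p_\pi$ genuinely share the same dynamics $p(S_t \mid S_{t-1}, A_{t-1})$ so that those factors cancel, and (iii) the per-timestep conditioning argument that converts $\doubleE_{p_q(H)}[\ln(\pi/q)]$ into a sum of state-conditional KL terms. None of these is a real obstacle, so I expect the main effort to be presentational — stating the decomposition cleanly — rather than mathematical.
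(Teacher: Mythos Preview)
Your proposal is correct and follows essentially the same route as the paper: start from $\ln p_\pi(\scriptO) = \ln \frac{p_\pi(H,\scriptO)}{p_\pi(H\mid\scriptO)}$, take the $p_q$-expectation, multiply and divide by $p_q(H)$ to split into ELBO plus evidence gap, then use the shared dynamics in $p_\pi(H)$ and $p_q(H)$ to cancel transition terms and reduce the log-ratio to $\sum_t R_t + \sum_t \ln(\pi/q)$. Your treatment is slightly more explicit than the paper's about the tower-rule step that converts $\doubleE_{p_q}[\ln(\pi/q)]$ into per-state KL terms and about the proportionality constant in the likelihood, but these are presentational refinements rather than a different argument.
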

\begin{proof}
    Assume an importance sampling distribution $q \in \Pi$ for $\pi \in \Pi$ such that it has sufficient support over $H$ and $p_\pi(H) > 0 \Longrightarrow p_q(H) > 0$ almost everywhere.
    \begin{align*}
        \ln p_\pi(\scriptO) &= \ln \frac{p_\pi(\scriptO, H)}{p_\pi(H | \scriptO)}
        \\
        &= \doubleE_{p_q(H)} \left[ \ln \left(\frac{p_\pi(\scriptO, H)}{p_\pi(H | \scriptO) } \frac{p_q(H)}{p_q(H)} \right)\right]
        \\
        &= \doubleE_{p_q(H)} \ln \frac{p_\pi(\scriptO, H)}{p_q(H) } + \doubleE_{p_q(H)} \ln \frac{p_q(H)}{p_\pi(H | \scriptO)}
        \\
        &= \doubleE_{p_q(H)} \left[ \ln p(\scriptO | H)  + \ln \frac{p_\pi(H)}{p_q(H) } \right] + \doubleE_{p_q(H)} \ln \frac{p_q(H)}{p_\pi(H | \scriptO)}
        \\[8pt]
        &= \doubleE_{p_q(H)} \left[ \ln p(\scriptO | H)  - \kldiv(p_q(H) \Vert \pi(H))\right] + \kldiv(p_q(H) \Vert p_\pi(H | \scriptO))
        \\
        &= \doubleE_{p_q(H)} \left[ \sum_t R_t  - \kldiv(q(a | S_t) \Vert \pi(a | S_t))\right] + \kldiv(p_q(H) \Vert p_\pi(H | \scriptO))
    \end{align*}
    where in the last step the transition terms for $p_\pi(H)$ and $p_q(H)$ cancel out. See the work by~\citet{levine2018reinforcement} for comparison.
\end{proof}

The result from Lemma~\ref{lemma:decomp} shows that the log-likelihood is decomposed into an evidence lower-bound and an evidence gap. 
This inequality becomes tight when $q$ is simply set to the posterior policy $q^*(H) \equiv p_\pi(H | \scriptO)$.
Thus, motivating the maximization objective for the lower-bound as given in Theorem~\ref{theorem:regpolicy}, or equivalently, by minimizing the evidence gap as considered by \citet{levine2018reinforcement}.

\subsection{Regularized Policy Improvement} \label{ap:ci:regmdp}

The result below gives a brief sketch that the expectation-maximization loop \cite{Neal1998} generates a sequence of regularized Markov decision processes (MDPs) that eventually converges to a locally optimal policy.
This result is a nice consequence of the control-as-inference framework.
A more general discussion outside of the expectation-maximization framework can be found in the work by \citet{geist2019regmdp}.

\begin{lemma}[Regularized Policy Improvement] \label{lemma:regpi}
    The solution $q^*$ to the problem,
    \begin{align*}
        \max_{q \in \Pi} \doubleE \left[ \sum_{t=1}^T R_t - \kldiv(q (a | S_t) \Vert \pi(a | S_t) ) \right],
    \end{align*}
    guarantees a policy improvement in the unregularized MDP, $\doubleE_{p_{q^*}(H)} [\sum_t R_t] \ge \doubleE_{p_{\pi}(H)} [\sum_t R_t]$. 
\end{lemma}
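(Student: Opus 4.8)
The plan is to exploit the variational characterization of $q^*$ directly. Since $q^*$ maximizes the regularized objective over \emph{all} of $\Pi$, and the prior $\pi$ itself lies in $\Pi$, it suffices to compare the objective value at $q^*$ against its value at the feasible point $q=\pi$. Concretely, I would write $J(q) \overset{\Delta}{=} \doubleE_{p_q(H)}\!\left[\sum_{t=1}^T R_t - \kldiv(q(a|S_t)\Vert \pi(a|S_t))\right]$ for the regularized return, so that $q^* = \operatorname*{arg\,max}_{q\in\Pi} J(q)$ by hypothesis (this $q^*$ is the posterior policy of Definition~\ref{def:posterior}, cf.\ Theorem~\ref{theorem:regpolicy}, whose existence we take as given).

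The key steps, in order: (i) evaluate $J$ at $q=\pi$ — every per-state term $\kldiv(\pi(a|S_t)\Vert \pi(a|S_t))$ is identically zero, so $J(\pi) = \doubleE_{p_\pi(H)}[\sum_{t=1}^T R_t]$, the unregularized return of the prior; (ii) invoke optimality of $q^*$ to get $J(q^*) \ge J(\pi) = \doubleE_{p_\pi(H)}[\sum_{t=1}^T R_t]$; (iii) use nonnegativity of the Kullback--Leibler divergence to drop the regularizer from $J(q^*)$, giving $\doubleE_{p_{q^*}(H)}[\sum_{t=1}^T R_t] \ge J(q^*)$; (iv) chain (ii) and (iii) to conclude $\doubleE_{p_{q^*}(H)}[\sum_{t=1}^T R_t] \ge \doubleE_{p_\pi(H)}[\sum_{t=1}^T R_t]$.

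There is no genuine obstacle here; the one subtlety worth stating carefully is that the regularizer sits \emph{inside} the expectation over the trajectory law $p_q(H)$, so the state-visitation measure is itself a function of $q$. This is precisely what makes the feasible point $q=\pi$ attain objective value equal to the plain return of $\pi$ (the regularizer vanishes pointwise, irrespective of which states are visited), and it is also why the argument delivers only $\doubleE_{p_{q^*}}[\sum_t R_t] \ge \doubleE_{p_\pi}[\sum_t R_t]$ rather than a sharper bound with a residual KL term measured under $p_\pi$. I would close by remarking that the inequality is strict unless $q^* = \pi$ holds $p_{q^*}$-almost everywhere, which is what justifies calling each EM step a \emph{strict} policy improvement until a fixed point is reached, and links back to the convergence claim following Theorem~\ref{theorem:regpolicy}.
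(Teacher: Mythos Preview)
Your argument is correct and is genuinely different from the paper's. The paper does not use the variational comparison at the feasible point $q=\pi$; instead it invokes Lemma~\ref{lemma:decomp} to identify $q^*$ with the posterior policy, writes its explicit Boltzmann form $q^*(a|s)\propto \pi(a|s)\exp Q^\pi_{\mathit{soft}}(s,a)$, and then argues heuristically that exponential tilting by $Q^\pi_{\mathit{soft}}$ ``shifts probability density to actions with larger expected cumulative reward'', hence yields an improvement. Your route is more elementary and more self-contained for the stated inequality: it needs neither the closed form of $q^*$ nor any appeal to soft value functions, only optimality of $q^*$ over $\Pi\ni\pi$ and nonnegativity of $\kldiv$. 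The paper's route, by contrast, exposes the structure of $q^*$ that is used elsewhere (e.g.\ in the SMC weights and the twisted proposals), but as a proof of \emph{this} lemma it leaves a gap---tilting by the \emph{soft} $Q$ is not the same as tilting by the standard $Q^\pi$, so the ``larger expected cumulative reward'' step is not fully justified there, whereas your chain $\doubleE_{p_{q^*}}[\sum_t R_t]\ge J(q^*)\ge J(\pi)=\doubleE_{p_\pi}[\sum_t R_t]$ is airtight.
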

\begin{proof}
    Lemma~\ref{lemma:decomp} shows that the solution $q^*$ is equivalent to the posterior policy distribution $p_\pi(H_{1:T} | O_{1:T})$.
    This implies that for each state $s \in \scriptS$, we have, $q^*(a | s) \propto \pi(a | s) \exp Q^\pi_{\mathit{soft}}(s, a)$.
    The exponential over $Q^\pi_{\mathit{soft}}(s, a)$ in the posterior policy $q^*$ interpolates $\pi$ to the greedy policy by shifting probability density to actions with larger expected cumulative reward.
    Thus, $q^*$ provides a policy improvement over $\pi$.
\end{proof}

\subsection{Proof of Theorem~\ref{theorem:regpolicy}} \label{ap:proof:regpolicy}
\begin{proof}[Proof of Theorem~\ref{theorem:regpolicy}]
    Lemma~\ref{lemma:decomp} shows how the posterior policy coincides with the optimal policy $q^*$ in a regularized Markov decision process (MDP). 
    Then Lemma~\ref{lemma:regpi} describes that this gives a policy improvement in the unregularized MDP.
    Iterating this process (e.g., an expectation-maximization loop) yields consecutive improvements to 
    the prior $\pi_{(n)} \leftarrow q^*_{(n-1)}$ and guarantees a locally optimal $\pi^*$ in the unregularized MDP as $n\rightarrow \infty$, which also implies $\kldiv(q^*_{(n)} \Vert \pi_{(n-1)}) \rightarrow 0$.
\end{proof}

\subsection{Importance sampling weights} \label{ap:ci:is}
For completeness, we give a detailed derivation for the result presented in Corollary~\ref{cor:is_weights}.
This derivation differs from the one given by \citet{piche2018probabilistic} in Appendix A.4. 
Our derivation corrects for the fact that we don't need to compute an expectation over the transition function for $\exp V^{\pi}_{\mathit{soft}}(S_t)$ in the denominator. 
However, this is only a practical difference (i.e., how the algorithm is implemented) to justify our calculation.

\begin{corollary}[Restated Corollary~\ref{cor:is_weights}]
    Assuming access to the transition model $p(S_{t+1} | S_t, A_t)$, we obtain the importance sampling weights for $p_\pi(H_{1:t} | \scriptO_{1:T}) / p_q(H_{1:t})$,
    \begin{align*}
        w_t = w_{t-1} \cdot \frac{\pi(A_t |S_t)}{q(A_t | S_t)} \exp(R_t) \frac{\doubleE[\exp V^\pi_{\mathit{soft}}(S_{t+1})]}{\exp V^\pi_{\mathit{soft}}(S_t)},
    \end{align*}
\end{corollary}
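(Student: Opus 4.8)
The plan is to unfold $w_t$ as the likelihood ratio $p_\pi(H_{1:t}\mid\scriptO_{1:T})/p_q(H_{1:t})$, cancel the dynamics factors, and then rewrite the surviving one-step posterior-policy factor with Bayes' rule so that the soft value surfaces in both numerator and denominator.

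First I would establish the product form of the marginal posterior. Starting from Definition~\ref{def:posterior}, I integrate $p_\pi(H_{1:T}\mid\scriptO_{1:T})$ over $H_{t+1:T}$: working backwards from $s=T$, each policy factor $p_\pi(A_s\mid S_s,\scriptO_{s:T})$ integrates to one over $A_s$ and each transition $p(S_s\mid S_{s-1},A_{s-1})$ integrates to one over $S_s$, leaving
\begin{align*}
    p_\pi(H_{1:t}\mid\scriptO_{1:T}) = \prod_{s=1}^t p_\pi(A_s\mid S_s,\scriptO_{s:T})\, p(S_s\mid S_{s-1},A_{s-1});
\end{align*}
equivalently one reads this straight off the recursion in Eq.~\ref{eq:rec}. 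The proposal $p_q(H_{1:t})$ has the analogous form from Eq.~\ref{eq:rl_history} with $q(A_s\mid S_s)$ in place of the posterior policy, so dividing cancels every transition term and gives $w_t = \prod_{s=1}^t p_\pi(A_s\mid S_s,\scriptO_{s:T})/q(A_s\mid S_s)$.

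Next I would expand a single factor. By Bayes' rule $p_\pi(A_s\mid S_s,\scriptO_{s:T}) = \pi(A_s\mid S_s)\,p(\scriptO_{s:T}\mid S_s,A_s)/p(\scriptO_{s:T}\mid S_s)$, where the normaliser is exactly $p(\scriptO_{s:T}\mid S_s) = \exp V^\pi_{\mathit{soft}}(S_s)$ by the definition of the soft value as the log-normaliser of the posterior policy (cf.\ Eq.~\ref{eq:post_policy}). This is the point where the derivation departs from \citet{piche2018probabilistic}: no expectation over the transition is required in the denominator. Substituting this together with the soft-Bellman identity $p(\scriptO_{s:T}\mid S_s,A_s) = e^{R_s}\,\doubleE[\exp V^\pi_{\mathit{soft}}(S_{s+1})]$ from Eq.~\ref{eq:soft_value} (expectation over $p(S_{s+1}\mid S_s,A_s)$) yields, for each $s$,
\begin{align*}
    \frac{p_\pi(A_s\mid S_s,\scriptO_{s:T})}{q(A_s\mid S_s)} = \frac{\pi(A_s\mid S_s)}{q(A_s\mid S_s)}\, e^{R_s}\, \frac{\doubleE[\exp V^\pi_{\mathit{soft}}(S_{s+1})]}{\exp V^\pi_{\mathit{soft}}(S_s)},
\end{align*}
and peeling the $s=t$ term off the product (with $w_0 = 1$) gives the stated recursion.

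The only real obstacle is careful bookkeeping with the conditioning variables: one must invoke the assumptions $S_{t+1}\indep\scriptO_{1:T}\mid S_t,A_t$ and $A_t\indep\scriptO_{<t}\mid S_t$ from Definition~\ref{def:posterior} to guarantee that each factor depends on $\scriptO_{t:T}$ only (not on the full $\scriptO_{1:T}$) and that the soft recursion of Eq.~\ref{eq:soft_value} conditions on $S_{t+1}$ alone; once those are in place, the marginalisation and the identification of the normaliser are routine.
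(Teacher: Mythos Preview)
Your proposal is correct and follows essentially the same route as the paper's proof: cancel the shared dynamics factors in the ratio $p_\pi(H_{1:t}\mid\scriptO_{1:T})/p_q(H_{1:t})$, apply Bayes' rule (Eq.~\ref{eq:post_policy}) to the surviving posterior-policy factor, and identify the numerator and denominator with $\exp Q^\pi_{\mathit{soft}}$ and $\exp V^\pi_{\mathit{soft}}$ via Eq.~\ref{eq:soft_value}. The only cosmetic difference is that you first write the full product $\prod_{s\le t}$ and then peel off the last term, whereas the paper factors off the time-$t$ increment directly using the recursion in Eq.~\ref{eq:rec}; the content is identical.
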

\begin{proof} For any statistic $f(\cdot)$, we have,
    \begin{align*}
        \doubleE_{p_\pi(H_{1:t} | \scriptO_{1:T})} f(H_{1:t}) &= \doubleE_{p_q(H_{1:t})} \left[ w_t \cdot f(H_{1:t}) \right] 
        \\[8pt]
        &= \doubleE_{p_q(H_{1:t})} \left[ \frac{p_\pi(H_{1:t} | \scriptO_{1:T})}{p_q(H_{1:t})}f(H_{1:t}) \right]
        \\[8pt]
        &= \doubleE_{p_q(H_{1:t})} \left[ \frac{p_\pi(S_t, A_t | H_{<t}, \scriptO_{1:T})p_\pi(H_{<t} | \scriptO_{1:T})}{p_q(S_t, A_t | H_{<t})p_q(H_{<t}))}f(H_{1:t}) \right]
        \\[8pt]
        &= \doubleE_{p_q(H_{1:t})} \left[w_{t-1} \cdot \frac{p_\pi(S_t, A_t | S_{t-1},  A_{t-1}, \scriptO_{t:T})}{p_q(S_t, A_t |  S_{t-1}, A_{t-1})} \cdot f(H_{1:t}) \right]
    \end{align*}
    where the last step follows from the Markov property.
    Then, we get,
    \begin{align*}
        \frac{
            p_\pi(S_t, A_t | S_{t-1}, A_{t-1}, \scriptO_{t:T})
        }{
            p_q(S_t, A_t | S_{t-1}, A_{t-1})
        } 
        &= 
        \frac{
            p_\pi(A_t | S_t, \scriptO_{t:T}) p(S_t | S_{t-1}, A_{t-1})
        }{
            q(A_t | S_t) p(S_t | S_{t-1}, A_{t-1})
        }
        =
        \frac{
            p_\pi(A_t | S_t, \scriptO_{t:T}) 
        }{
            q(A_t | S_t)
        }
        \\[8pt]
        &= \frac{
            \pi(A_t | S_t)
        }{
            q(A_t | S_t)
        }
        \frac{
            p_\pi(\scriptO_{t:T} | S_t, A_t)
        }{
            p_\pi(\scriptO_{t:T} | S_t)
        }
        = \frac{
            \pi(A_t | S_t)
        }{
            q(A_t | S_t)
        }
        \frac{
            \exp Q^\pi_{\mathit{soft}}(S_t, A_t)
        }{
            \exp V^\pi_{\mathit{soft}}(S_t)
        } 
        \\[8pt]
        &= \frac{
            \pi(A_t | S_t)
        }{
            q(A_t | S_t)
        }
        \exp(R_t)
        \frac{
            \doubleE [\exp V^\pi_{\mathit{soft}}(S_{t+1})]
        }{
            \exp V^\pi_{\mathit{soft}}(S_t)
        }.
    \end{align*}
\end{proof}

\section{Experiment Details} \label{ap:setup}

Our code can be found at \url{https://github.com/joeryjoery/trtpi}.

\subsection{Environments}  \label{ap:setup:envs}
We used the Jumanji~1.0.1 implementations of the Snake-v1 and Rubikscube-partly-scrambled-v0 environments \cite{bonnet2024jumanji}, code is available at \url{https://github.com/instadeepai/jumanji}.
For the Brax~0.10.5 implementation we used the Ant and Halfcheetah environments using the `spring' backend, code is available at \url{https://github.com/google/brax}.

Snake environment details:
\begin{itemize}
    \item The observation is a 12x12 image with 5 channels that indicate the position of the fruit and positional features of the snake, it also gives the integer number of steps taken which we encode as a bit-vector for the neural network.
    \item The action space is a choice over 4 integers that indicate moving the snake: up, down, left, or right.
    \item The reward is zero everywhere, except for a +1 when the fruit is picked up.
    \item The agent terminates after 4000 steps or when colliding with itself.
\end{itemize}

Rubikscube environment details:
\begin{itemize}
    \item The observation is a 3D integer tensor of shape $6\times3\times3$ with values in $[0, 1, 2, 3, 4, 5]$ indicating the color. It also gives the integer number of steps taken which we encode as a bit-vector for the neural network.
    \item The action space is a 3 dimensional integer array to choose the face to turn, depth of the turn, and direction of the turn. For a 3x3 cube this action-space induced 18 combinations.
    \item The reward is zero everywhere, except for a +1 when the puzzle is solved.
    \item The agent terminates after 20 steps or when solving the puzzle.
    \item We used the partly-scrambled version of this environment, which means that the solution is at most 7 actions removed from any of the starting states.
\end{itemize}

Brax environment details:
\begin{itemize}
    \item The observations are continuous values with 27 dimensions for Ant and 18 dimensions for Halfcheetah.
    \item The action space is a bounded continuous vector between $[-1, 1]^D$, with $D=8$ for Ant and $D=6$ for Halfcheetah.
    \item The reward is a dense, but involved formula that penalizes energy expenditure (norm of the action) and rewards the agent for moving in space (in terms of spatial coordinates).
    \item The agent terminates after 4000 steps or when ending up in a unhealthy joint-configuration.
\end{itemize}

\subsection{Hardware Requirements} \label{ap:hardware}
All experiments were run on a GPU cluster with a mix of NVIDIA GeForce RTX 2080 TI 11GB, Tesla V100-SXM2 32GB, NVIDIA A40 48GB, and A100 80GB GPU cards. 
Each run (random seed/ repetition) required only a few CPU cores (2 logical cores) with a low memory budget (e.g., 4GB). 
For our most expensive singular experiments we found that we needed about 6GB of VRAM at most, and that the replay buffer size is the most important parameter in this regard.
Roughly speaking, we found that the SMC based agents all completed both training and evaluation under 30 minutes on Snake with a budget of $K=8$ particles and a depth of $m=8$, the Gumbel MCTS required 3 hours for a budget of $N=64$.

\paragraph{Training Time Estimation.} To estimate the training runtime in seconds (in Figure~\ref{fig:scaling_runtime}), we used an estimator of the the runtime-per-step and multiplied this by the current training iteration to obtain a cumulative estimate. For each training configuration, we measured the runtime-per-step and computed an interquartile mean over 1) the random seeds for relative wallcock time and 2) the training iterations themselves.
This estimator should more robustly deal with the variations in hardware, the compute clusters' background load, and XLA dependent compilation.
Of course, estimating runtime is strongly limited to the hardware and software implementation, and our results should only hint towards a trend of improved scaling to parallel compute for the planner algorithms.

\subsection{Hyperparameters, Model Training, and Software Versioning}  \label{ap:hyper}

The hyperparameters for our experiments are summarized in the following tables:
\begin{itemize}
    \item Table \ref{ap:tab:shared}: Shared parameters across experiments.
    \item Table \ref{ap:tab:ppo_params}: PPO-specific parameters.
    \item Table \ref{ap:tab:mctx_params}: MCTS-specific parameters.
    \item Table \ref{ap:tab:smc_params}: Shared SMC parameters.
    \item Table \ref{ap:tab:trtpi_params}: Parameters for our extended agent.
\end{itemize}
We underline all default values in bold, all other parameter values indicated in the sets were run in an exhaustive grid for the ablations.
The ablation results then report marginal performance over configurations and over seeds (repetitions). 
These experimental design decisions closely follow the suggestions laid out in the work by \citet{patterson_empirical_2024}.

Despite conflating all model parameters into one joint set $\theta$, we used separate neural network parameters for the policy, state, and state-action value models.
Given the current dataset (replay buffer) within the training loop $\scriptD_{(n)}$, the loss is a simple empirical cross-entropy of collective terms,
\begin{align} \label{eq:loss}
    \scriptL(\theta) = 
    \doubleE_{(S_t, A_t, \hat{q}_t, \hat{V}_t) \sim \scriptD_{(n)}} 
    \left[ 
        \frac{c_v}{2} (\hat{V}_t - V_\theta^\pi(S_t))^2
        +
        \frac{c_v}{2} (\hat{V}_t - Q_\theta^\pi(S_t, A_t))^2
        -
        c_\pi \doubleE_{a \sim \hat{q}_t }\ln \pi_\theta (a | S_t) 
        -
        c_{ent} \scriptH[\pi_\theta(a | S_t)]
    \right]
\end{align}
where $\hat{q}_t$ and $\hat{V}_t$ are estimated targets for the policy and value respectively (see main text), and $\scriptH[\pi_\theta] = -\doubleE_{\pi_\theta} \ln \pi_\theta$ is an entropy penalty for the policy.
We approximated $\scriptL$ with stochastic gradient descent (see the hyperparameter tables), where we always used the AdamW optimizer \cite{loshchilov2018decoupled} with an \( l_2 \) penalty of \(10^{-6}\) and a learning rate of \(3 \cdot 10^{-3}\). Gradients were clipped using two methods, in order: a max absolute value of 10 and a global norm limit of 10.

The replay buffer was implemented as a uniform circular buffer with its size calculated as:
\[
\text{max-age of data} \times \text{number of parallel environments} \times \text{number of unroll steps}.
\]
The max-age of data was tuned to fit into reasonable GPU memory.

For the A2C baseline, hyperparameters are detailed in \citet{bonnet2024jumanji}, and for the SAC baseline, refer to \citet{freeman2021brax}. 
Additionally, Table~\ref{tab:softwareversions} provides version information for key software packages, this also directs towards default hyperparameters of baselines not listed here.
We implemented everything based on Jax~0.4.30 in Python~3.12.4.

\begin{table}[ht]
    \centering
    \caption{Software module versioning that we used for our experiments (also includes default parameter settings).} \label{tab:softwareversions}
    \vspace{0.3em}
    \begin{tabular}{|l|c|}
        \hline
        \textbf{Package} & \textbf{Version} \\ 
        \hline
        brax & 0.10.5 \\
        optax & 0.2.3 \\
        flashbax & 0.1.2 \\
        rlax & 0.1.6 \\
        mctx & 0.0.5 \\
        flax & 0.8.4 \\
        jumanji & 1.0.1 \\
        \hline
    \end{tabular}
\end{table}

\subsection{Neural Network Architectures}
Neural network designs were largely adapted from the A2C reference \cite{bonnet2024jumanji} and \citet{macfarlane2024spo}. Minor modifications were made to handle the heterogeneity in environment action spaces. Notably, we standardized network architectures across environments wherever feasible, adjusting input embedding and output construction as needed. The specific configurations are listed below.

\paragraph{Brax Environments}
\begin{itemize}
    \item 2-layer MLP with 256 nodes per layer.
    \item Leaky-ReLU activations followed by LayerNorm.
    \item Outputs parameterized a diagonal multivariate Gaussian squashed via Tanh, as in \citet{haarnoja_sac_2018}.
\end{itemize}

\paragraph{Jumanji RubiksCube Environment}
\begin{itemize}
    \item 2-layer MLP with 256 nodes per layer (same as Brax).
    \item We used a flat representation for the 3-dimensional categorical action space (logits over all item-combinations). In contrast: the A2C baseline used a structured representation with three separate categorical outputs (logits per item).
\end{itemize}

\paragraph{Jumanji Snake Environment}
\begin{itemize}
    \item 2-layer MLP with 128 nodes per layer and Leaky-ReLU activations, followed by LayerNorm for the main module.
    \item Based on \citet{bonnet2024jumanji}, the input image is embedded using a single 3x3 convolutional layer with:
    \begin{itemize}
        \item 3 channels.
        \item Leaky-ReLU activation.
        \item No LayerNorm.
    \end{itemize}
    \item The resulting embedding was flattened before being passed to the main MLP module.
\end{itemize}

\clearpage

\begin{table}[p]
    \centering
    \caption{Shared experiment hyperparameters. }
    \label{ap:tab:shared}
    \vspace{0.3em}
    \begin{tabular}{|l|c|c|c|}
        \hline
         \textbf{Name} & \textbf{Symbol} & \textbf{Value Jumanji} & \textbf{Value Brax} 
         \\
         \hline
         SGD Minibatch size    &    & 256  & 256
         \\
         SGD update steps    &    & 100  & 64
         \\
         Unroll length (nr. steps in environment)    &    &  64 &  64  
         \\
         Batch-Size (nr. parallel environments)    &    &  128 &  64 
         \\
         (outer-loop) TD-Lambda    &  $\lambda$   &  0.95 &  0.9
         \\
         (outer-loop) Discount    &  $\gamma$   &  0.997 & 0.99 
         \\
         Value Loss Scale    &  $c_v$  &  0.5 &  0.5  
         \\
         Policy Loss Scale    &  $c_\pi$  &  1.0 &  1.0
         \\
         Entropy Loss Scale    &  $c_{ent}$  &  0.1 &  0.0003
         \\
         \hline
    \end{tabular}
\end{table}

\begin{table}[p]
    \centering
    \caption{Proximal Policy Optimization hyperparameters. We did not use advantage-normalization computed the policy entropy exactly. 
    }
    \label{ap:tab:ppo_params}
    \vspace{0.3em}
    \begin{tabular}{|l|c|c|c|}
        \hline
         \textbf{Name} & \textbf{Symbol} & \textbf{Value Jumanji} & \textbf{Value Brax} 
         \\
         \hline
         Policy-Ratio clipping    &  $\epsilon$   &  0.3 & 0.3  \\
         Value Loss Scale    &  $c_v$  &  1.0 &  0.5  
         \\
         Policy Loss Scale    &  $c_\pi$  &  1.0 &  1.0
         \\
         Entropy Loss Scale    & $c_{ent}$   &  0.1 &  0.0003
         \\
         \hline
    \end{tabular}
\end{table}

\begin{table}[p]
    \centering
    \caption{Gumbel Monte-Carlo tree search experiment hyperparameters \cite{danihelka2022policy}.
    }\vspace{0.3em}
    \begin{tabular}{|l|c|c|c|}
        \hline
         \textbf{Name} & \textbf{Symbol} & \textbf{Value Jumanji} & \textbf{Value Brax} 
         \\
         \hline
         Replay Buffer max-age    &    &  64 & 64 
         \\
         Nr. bootstrap atoms $\pi$    & $B$   &  30 & 30 
         \\
         Search budget    &  $N$  &  $\{16, 64\}$ & $\{16, 64\}$ 
         \\
         Max depth   &   &  16 & 16
         \\
         Max breadth   &   &  16 & 16
         \\
         \hline
    \end{tabular}
    \label{ap:tab:mctx_params}
\end{table}

\begin{table}[p]
    \centering
    \caption{Shared Sequential Monte-Carlo hyperparameters (ours and \citeauthor{macfarlane2024spo},~\citeyear{macfarlane2024spo}). Bold values indicate those used in the main results, with the remaining values in the set being explored in the ablations.}
    \begin{tabular}{|l|c|c|c|}
        \hline
         \textbf{Name} & \textbf{Symbol} & \textbf{Value Jumanji} & \textbf{Value Brax} 
         \\
         \hline
         Replay Buffer max-age    &    &  64 & 64 
         \\
         Planner Depth    &  $m$   &  $\{\mathbf{4}, \mathbf{8}, 16\}$ &  $\{\mathbf{4}, \mathbf{8}\}$ 
         \\
         Number of particles    &  $K$   &  $\{\mathbf{4}, \mathbf{8}, 16\}$ &  $\{\mathbf{4}, \mathbf{8}\}$ 
         \\
         Resampling period  & $r$  &  $\{1, \mathbf{3}\}$ &  $\{1, \mathbf{3}\}$ 
         \\
         Target temperature (env. reward scale)   &  $T$   &  $\{1.0, \mathbf{0.1}\}$ &  $\{1.0, \mathbf{0.1}\}$ 
         \\
         Nr. bootstrap atoms $\pi$    &  $B$   &  $30$ &  $30$ 
         \\
         \hline
    \end{tabular}
    \label{ap:tab:smc_params}
\end{table}

\begin{table}[p]
    \centering
    \caption{Trust-Region Twisted Sequential Monte-Carlo hyperparameters (i.e., ours only). The underlined values recover the base SMC. }\vspace{0.3em}
    \begin{tabular}{|l|c|c|c|}
        \hline
         \textbf{Name} & \textbf{Symbol} & \textbf{Value Jumanji} & \textbf{Value Brax} 
         \\
         \hline
         (inner-loop) Retrace($\lambda$)    &  $\lambda_{SMC}$   &  0.95 &  0.9
         \\
         (inner-loop) Discount    &  $\gamma_{SMC}$   &  0.997 & 0.99 
         \\
         (outer-loop) Value mixing $\hat{V}_t$   &  $\sigma$  &  0.5 (\underline{0.0}) &  $\{\underline{0.0}, \mathbf{0.5}, 1.0\}$
         \\
         Estimation $\hat{q}^*$   &   &  $\{$\underline{Dirac}, \textbf{Message-Passing}$\}$ &  $\{$\underline{Dirac}, \textbf{Message-Passing}$\}$
         \\
         Revived resampling &   &  $\{$\underline{False}, \textbf{True}$\}$ &  $\{$\underline{False}, \textbf{True}$\}$
         \\
         Proposal (adaptive) Trust-Region &  $\epsilon_\alpha$  &  $\{$\underline{0.0}, \textbf{0.1}, 0.3$\}$ &  $\{$\underline{0.0}, \textbf{0.1}, 0.3$\}$
         \\
         \hline
    \end{tabular}
    \label{ap:tab:trtpi_params}
\end{table}

\clearpage

\subsection{Details on Constrained Proposals}  \label{ap:trtpi_solution}
We solve the constrained program in Eq.~\ref{eq:twisted_prop} in the SMC planner in Algorithm~\ref{alg:particle_filter} for each individual state-particle $S_t^{(i)}$.
To deal with general action-spaces (e.g., continuous), we sample $B$ atoms from the prior policy $\pi_\theta$ for uniform bootstrapping.
As stated in Theorem~\ref{theorem:regpolicy} and accompanying text, the Lagrangian  of this program can be used to define a Boltzmann policy,
\begin{align}
    L(q, \beta^{-1}, \eta) = \doubleE_q Q^\pi_\theta + (\epsilon_\alpha - \beta^{-1} \kldiv(q \Vert \pi_\theta)) + (1 - \eta),
\end{align}
where taking the partial derivatives and setting them to zero gives $q^* \propto \pi_\theta \exp \beta Q^\pi_\theta$, which is analytically normalizable (see \citet{grill_2020_mctsRPO} for comparison).
Given this distribution $q^*$, we found that the following minimization problem was the most numerically stable in finding the optimal temperature parameter $\beta^{-1}$,
\begin{align}
    \min_{\beta^{-1}} \Vert \: \epsilon_\alpha - \doubleE_{q^*}[\ln q^*(a | S) - \ln \pi_\theta (a | S)] \: \Vert^2_2,
\end{align}
which we solve with a bisection search.
In combination with bootstrapping from $\pi_\theta$, the above $\ln \pi_\theta (a | S)$ is essentially an entropy constraint on $q^*$.
As stated in Subsection~\ref{method:trt_proposal}, we set $\epsilon_\alpha$ adaptively based on some greediness tolerance $\alpha \in [0, 1]$.

\subsection{Details on Figure~\ref{fig:path_degeneracy}}
We adapted the top-figure with the colored and grayed out trajectories from Figure~2 of \citet{svensson_nonlinear_2015} to show the discarded data in a naive forward-only sequential Monte-Carlo (SMC) planner.
We generated the two bottom figures using our own SMC planner implementation.
The KL-divergence was evaluated by: running SMC at timestep 0 on a dummy environment, extracting the sampled policy as by Eq.~\ref{eq:dirac_smc} or from Section~\ref{method:policy_inference}, projecting the sampled logits back to their original action-space, and calculating the KL-divergence of the optimal stochastic policy to the canonical (non-bootstrapped) logits from SMC.
Most importantly, this environment had discrete actions and zero reward everywhere, making the dynamics function like an absorbing state.
For this reason, the optimal (stochastic) policy is uniformly random with a value $V^\pi$ of zero everywhere.
The bottom-left of Figure~\ref{fig:path_degeneracy}, thus, visualizes the consequence of recursive bootstrapping, which degenerates the policy in terms of KL-divergence from the optimal policy.
Our method does not incur this in the bottom-right aside from the effect caused by the particle budget.

\section{Pseudocode} \label{ap:pseudo}
We give a simplified overview of our method in Algorithm~\ref{alg:trt_particle_filter}, which is an extended version of the one from the main paper (Algorithm~\ref{alg:particle_filter}).
The pseudocode documents our specific contributions in comparison to the base SMC planner.

Note that the implementation for the online tracking of ancestor values, following \citet{delmoral2010forwardsmc} is very similar to the eligibility trace known in reinforcement learning \cite{sutton_reinforcement_2018, seijen2014trueonlinetdlambda}.
Fundamentally, this can be considered as initializing the eligibilities of the ancestor particles to one, and continuously decaying these eligibilities while accumulating value updates (i.e., without updating the ancestor-eligibility).

\begin{algorithm}[ht]
    \setlength{\baselineskip}{1.5\baselineskip}

    \caption{Bootstrapped Particle Filter for RL (Our TRT-SMC Pseudocode based on Algorithm~\ref{alg:particle_filter}).}    \label{alg:trt_particle_filter}
    \begin{algorithmic}[1]
        \REQUIRE $K$ (number of particles), $m$ (depth), $r$ (resampling period), $\alpha$ (proposal greediness)
        \STATE Initialize:
        \vspace{-4mm}
        \begin{itemize}
            \setlength{\itemsep}{-0.5em}
            \item Ancestor identifier $\{J^{(i)}_1 = i \}_{i=1}^K$     
            \item States $\{S_1^{(i)} \sim p(S_1)\}_{i=1}^K$
            \item Reference States $\{\widetilde{S}_1^{(i)} \leftarrow S_1^{(i)}\}_{i=1}^K$  \hspace{\fill}  // Revived resampling: track last non-terminal states
            \item Weights $\{\widetilde{w}_0^{(i)} = 1\}_{i=1}^K$
            \item Ancestor Log-Probabilities $\{\hat{Q}_0^{(i)} = 0\}_{i=1}^K$ \hspace{\fill}  // Policy Inference
        \end{itemize}
        \vspace{-3.5mm}
        \FOR{$t = 1$ to $m$}
            \item[] // TRT-SMC: Create a set of Trust-Region twisted proposal distributions
            \STATE $\{q^{(i)}_t \mid \text{ where } q^{(i)}_t \text{ solves Eq.~\ref{eq:twisted_prop} for a given } \alpha, S_t^{(i)}\}_{i=1}^K$
            \item[] // Default SMC: Update particles
            \STATE $\{A_t^{(i)} \sim q^{(i)}_t(A_t | S_t^{(i)})\}_{i=1}^K$ 
            \STATE $\{S_{t+1}^{(i)} \sim p(S_{t+1} | S_t^{(i)}, A_t^{(i)}) \}_{i=1}^K$
            \STATE $\{\widetilde{w}_t^{(i)} = \widetilde{w}_{t-1}^{(i)} \frac{\pi(A_t^{(i)} | S_t^{(i)})}{q(A_t^{(i)} | S_t^{(i)})} e^{R_t^{(i)}} \frac{\mathbb{E} \exp V_\theta^\pi(S_{t+1}^{(i)})}{\exp V_\theta^\pi(S_{t}^{(i)})}\}_{i=1}^K$
            \item[] // Revived resampling: Track the last non-terminal states
            \STATE $\{ \widetilde{S}_{t+1}^{(i)} \leftarrow \begin{cases}
                \widetilde{S}_{t}^{(i)}, \quad \text{If } S_{t+1}^{(i)} \text{ is terminal}, \\
                S_{t+1}^{(i)}, \quad \text{Otherwise},
            \end{cases} \}_{i=1}^K$ 
            \item[] // Policy inference \& Value Estimation: online accumulation of ancestor statistics
            \STATE $\{\scriptJ^{(j)} \leftarrow \{ i \in [1, K] \mid J_t^{(i)} = j \}\}_{j=1}^K$
            \STATE $\hat{Q}_t^{(j)} = \hat{Q}_{t-1}^{(j)} + 
            \begin{cases}
                0, & \hspace{-.4em}  \scriptJ_t^{(j)} = \emptyset, 
                \\
                \ln \left( \frac{1}{|\scriptJ_t^{(j)}|}\sum_{i \in \scriptJ_t^{(j)}} \frac{\widetilde{w}_t^{(i)}}{\widetilde{w}_{t-1}^{(i)}} \right), & \hspace{-.4em} \scriptJ_t^{(j)} \ne \emptyset,
            \end{cases}$
            \item[]  // Default SMC: Bootstrap (periodically) through resampling
            \IF{$t \bmod r == 0$ }
                \STATE Normalized probability vector: $\overline{w}_t[i] = \frac{\widetilde{w}_t^{(i)}}{\sum_{j=1}^K \widetilde{w}_t^{(j)}}$
                \STATE $\{J_t^{(i)} \sim \text{Categorical}(\overline{w}_t)\}_{i=1}^K$
                \item[]  // Revived resampling: resample particles by their reference states, then reset the references.
                \STATE $\{(S_{t+1}^{(i)}, A_t^{(i)})\}_{i=1}^K \leftarrow \{(\widetilde{S}_{t+1}^{(J_t^{(i)})}, A_t^{(J_t^{(i)})})\}_{i=1}^K$ 
                \STATE $\{\widetilde{S}_{t+1}^{(i)} \leftarrow S_{t+1}^{(i)}\}_{i=1}^K$
                \STATE $\{\widetilde{w}_t^{(i)} = 1\}_{i=1}^K$
            \ENDIF
        \ENDFOR
        \item[] // Return the estimated values for performing policy inference
        \RETURN $\{\hat{Q}_{m}^{(j)}\}_{i=1}^K$
        \end{algorithmic}
\end{algorithm}

\begin{algorithm}[t]
    \setlength{\baselineskip}{1.25\baselineskip}

    \caption{Outer EM-loop for Approximate Policy Iteration}    \label{alg:outer_loop}
    \begin{algorithmic}[1]
        \REQUIRE Initial iterate $\theta_{(1)}$ for neural networks, replay buffer $\scriptD_{(1)}$.
        \FOR{$n = 1$ to $N$}
            \STATE $S_1 \sim p(S_1)$
            \FOR{$t = 1$ to $T$}
                \item[] // Inner-loop; Model-Predictive Control (Algorithm~\ref{alg:particle_filter})
                \STATE $\{J_{t:t+m}^{(i)}, H_{t:t+m}^{(i)}, \widetilde{w}_{t:t+m}^{(i)}\}_{i=1}^K \sim \text{SMC}(S_t; \pi_{\theta_{(n)}}, V_{\theta_{(n)}}^\pi)$
                \STATE Estimate $\hat{q}^*_t$, a policy using SMC-output (e.g., Eq~\ref{eq:dirac_smc}).
                \STATE Estimate $\hat{V}_t'$, a (inner) value using SMC-output (e.g., $V_{\theta_{(n)}}^\pi(S_t)$).
                \STATE Sample action from search policy $A_t \sim \hat{q}_t^*$
                \item[] // Environment step and data collection
                \STATE $S_{t+1} \sim p_{env}(\cdot | S_t, A_t), R_t \sim R(S_t, A_t)$
                \STATE Append $(S_t, A_t, R_t, \hat{q}_t^*, \hat{V}_t')$ to buffer $\scriptD_{(n)}$
            \ENDFOR
            \item[] // Outer loop; learning through SGD
            \STATE Compute (outer) value estimators $\hat{V}_t$ using rewards $R_t$ and (inner) values $\hat{V}_t'$ from $\scriptD_{(n)}$ (e.g., truncated TD$(\lambda)$)
            \STATE Update $\theta_{(n+1)}$ with SGD on $\scriptL(\theta_{(n)}; \scriptD_{(1:n)})$ (Equation~\ref{eq:loss})
            \STATE (Optionally: Circularly wrap $\scriptD_{(n+1)}$ from $\scriptD_{(n)}$)
        \ENDFOR
        \RETURN $\pi_{\theta_{(N+1)}}$
    \end{algorithmic}
\end{algorithm}

\clearpage
\section{Supplementary Results} \label{ap:results}

\begin{figure*}[!ht]
    \centering
    \includegraphics[width=\linewidth]{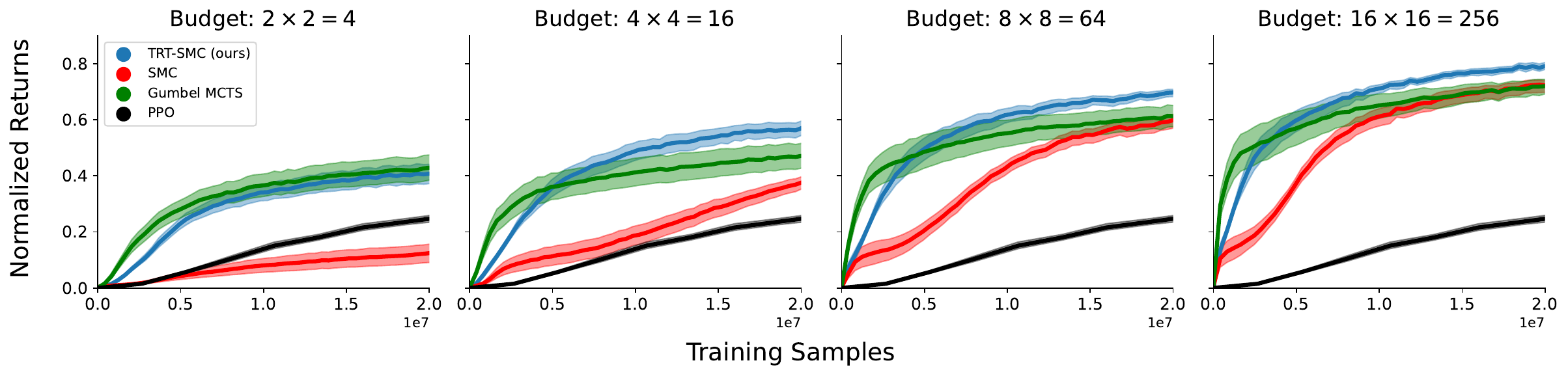}
    \caption{
        Normalized expected evaluation curves for the discrete environments over increasing planning budgets. 
        These budgets were calculated as $N = K \times m$, where $K = m$, to keep the number of transitions uniform between the MCTS and SMC methods. 
        Shaded regions give BCa $\alpha=0.01$ intervals over 2 times 30 seeds. 
        See also Figure~\ref{fig:scaling_runtime} in the main text for a similar comparison to runtime scaling.
    }
    \label{fig:scaling_sampleeff}
\end{figure*}

\begin{figure*}[!ht]
    \centering
    \includegraphics[width=.55\linewidth]{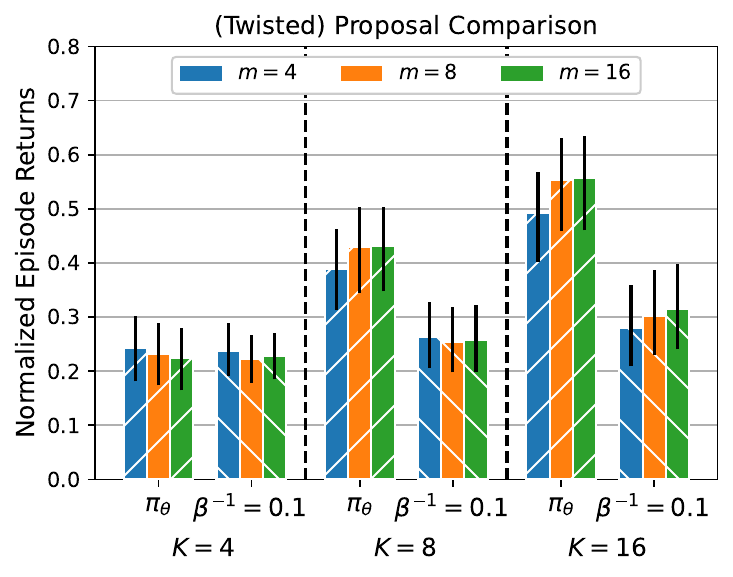}
    \caption{Final expected performance for the prior $\pi_\theta$ and the regularized proposal with a static temperature of $\beta^{-1} = 0.1$. This plot is identical to the left barplot in Figure~\ref{fig:main_ablations} and shows that taking values into account for the proposal doesn't directly translate to improved performance (i.e., the approach taken by \citeauthor{lioutas2023critic},~\citeyear{lioutas2023critic}). We found it essential for performance that the temperature $\beta^{-1}$ must be set adequately, which we achieved with the adaptive trust-region method.}
    \label{fig:soft_ablations}
\end{figure*}

\begin{figure}[!t]
    \centering
    \includegraphics[width=\linewidth]{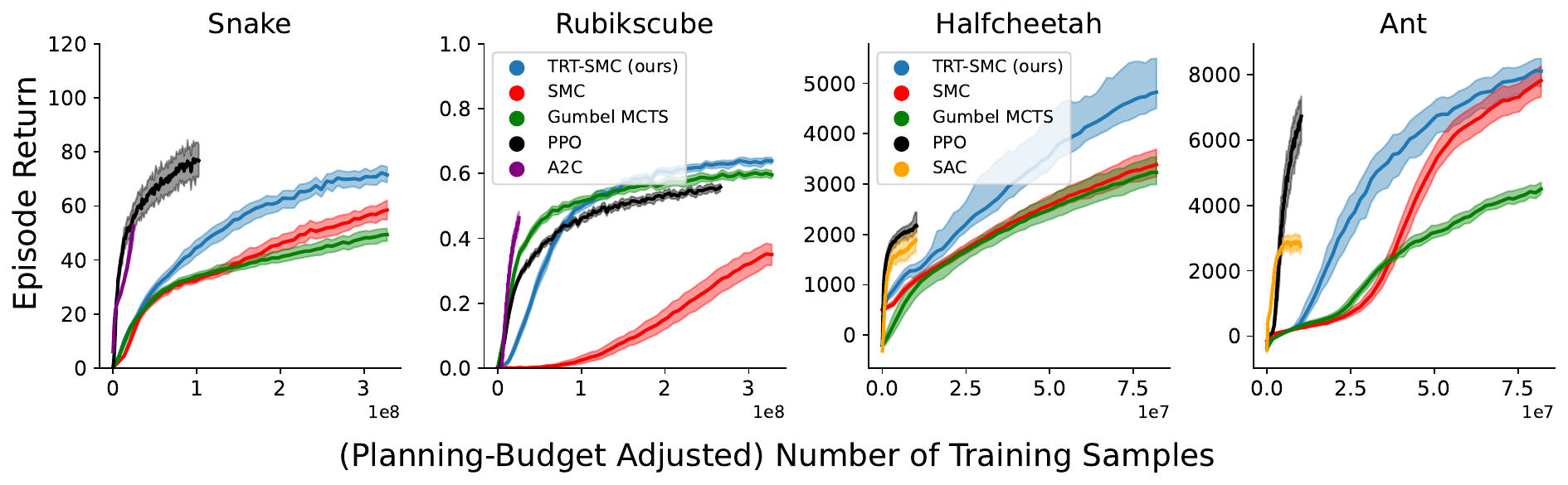}
    \caption{Main results from Figure~\ref{fig:main_result} with the adjusted $x$-axis to account for additional true environment samples during planning.
    In terms of sample-efficiency this gives a stark contrast to the result of the main paper, and shows that the model-free methods are more sample efficient.
    However, methods that utilize planning can always utilize an accurate simulator to skew the $x$-axis similarly to the result of Figure~\ref{fig:main_result}. }
    \label{fig:adjusted_main_results}
\end{figure}

\begin{figure}[!ht]
    \centering
    \includegraphics[width=0.95\linewidth]{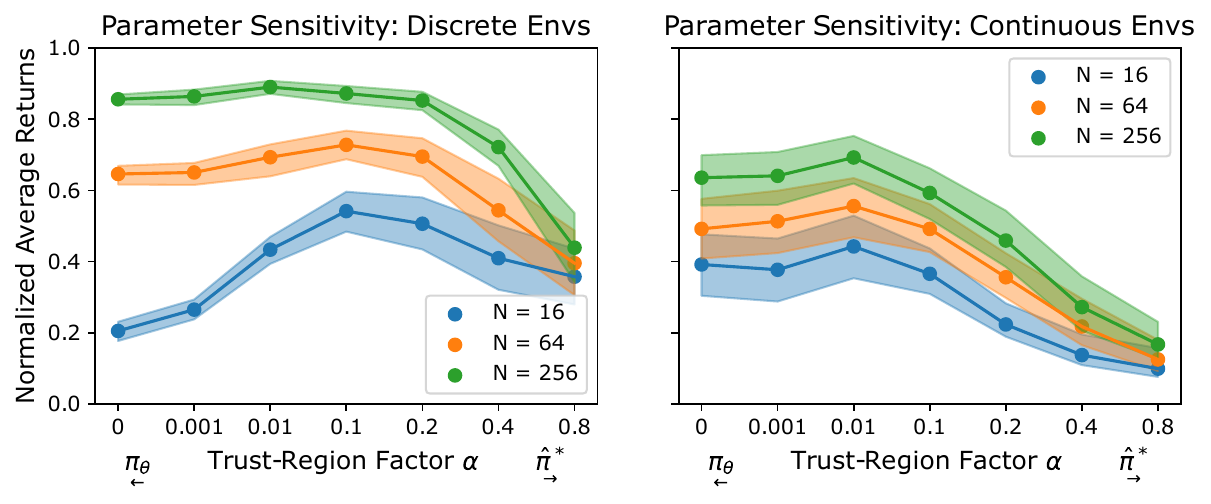}
    \caption{Sensitivity plot for the adaptive trust-region parameter $\alpha$ split for the discrete (Jumanji) and continuous (Brax) environments. This splits up the results shown in Figure~\ref{fig:sensitivity_alpha} from the main paper.}
    \label{fig:alpha_sensitivity_split}
\end{figure}

%%%%%%%%%%%%%%%%%%%%%%%%%%%%%%%%%%%%%%%%%%%%%%%%%%%%%%%%%%%%%%%%%%%%%%%%%%%%%%%
%%%%%%%%%%%%%%%%%%%%%%%%%%%%%%%%%%%%%%%%%%%%%%%%%%%%%%%%%%%%%%%%%%%%%%%%%%%%%%%

\end{document}